\newcommand{\X}{\mathcal{X}}
\newcommand{\y}{\bm{y}}
\newcommand{\K}{\bm{K}}
\newcommand{\Id}{\bm{I}}
\newcommand{\transp}{^{\top}}
\xpatchcmd{\proof}{\topsep0\p@\@plus0\p@\relax}{}{}{}
\theoremstyle{plain}
\newtheorem{theorem}{Theorem}[section]
\newtheorem{lemma}[theorem]{Lemma}
\theoremstyle{definition}
\newtheorem{assumption}[theorem]{Assumption}
\theoremstyle{remark}
\newtheorem{remark}[theorem]{Remark}
\icmltitlerunning{Gaussian Process Uniform Error Bounds with Unknown Hyperparameters}
\begin{document}

\twocolumn[
\icmltitle{Gaussian Process Uniform Error Bounds with \\ Unknown Hyperparameters for Safety-Critical Applications}




\begin{icmlauthorlist}
\icmlauthor{Alexandre Capone}{tum}
\icmlauthor{Armin Lederer}{tum}
\icmlauthor{Sandra Hirche}{tum}
\end{icmlauthorlist}

\icmlaffiliation{tum}{TUM School of Computation, Information and Technology, Technical University of Munich, Munich, Germany}

\icmlcorrespondingauthor{Alexandre Capone}{alexandre.capone@tum.de}

\icmlkeywords{Machine Learning, ICML}

\vskip 0.3in
]



\printAffiliationsAndNotice{}  

\begin{abstract}
	Gaussian processes have become a promising tool for various safety-critical settings, since the posterior variance can be used to directly estimate the model error and quantify risk. However, state-of-the-art techniques for safety-critical settings hinge on the assumption that the kernel hyperparameters are known, which does not apply in general. To mitigate this, we introduce robust Gaussian process uniform error bounds in settings with unknown hyperparameters. Our approach computes a confidence region in the space of hyperparameters, which enables us to obtain a probabilistic upper bound for the model error of a Gaussian process with arbitrary hyperparameters. We do not require to know any bounds for the hyperparameters a priori, which is an assumption commonly found in related work. Instead, we are able to derive bounds from data in an intuitive fashion. We additionally employ the proposed technique to derive performance guarantees for a class of learning-based control problems. Experiments show that the bound performs significantly better than vanilla and fully Bayesian Gaussian processes.
\end{abstract}

\section{Introduction}
\label{section:introduction}

\noindent Gaussian processes (GPs) have become an often-used tool for regression due to their flexibility and good generalization properties. In addition to being successful at approximating unknown functions, GPs also come equipped with a measure of model uncertainty in the form of the posterior variance \cite{Rasmussen2006}. This quantity has shown promising results for estimating the error between the posterior mean and the underlying function \cite{Srinivas2012, chowdhury2017kernelized,lederer2019uniform, maddalena2021deterministic,sun2021uncertain}. 
Due to this characteristic, GPs have become particularly interesting for safety-critical settings, i.e., whenever safety or performance constraints need to be considered during decision-making. Some examples include controller tuning \cite{Capone2019BacksteppingFP,pmlr-v120-lederer20a}, estimating safe operating regions \cite{berkenkamp2017safe}, as well as recommendation engines \cite{pmlr-v37-sui15}.

%

\begin{figure}[t]
	\centering
	\includegraphics[scale=0.3,trim={0.5cm 1.2cm 0 0},clip]{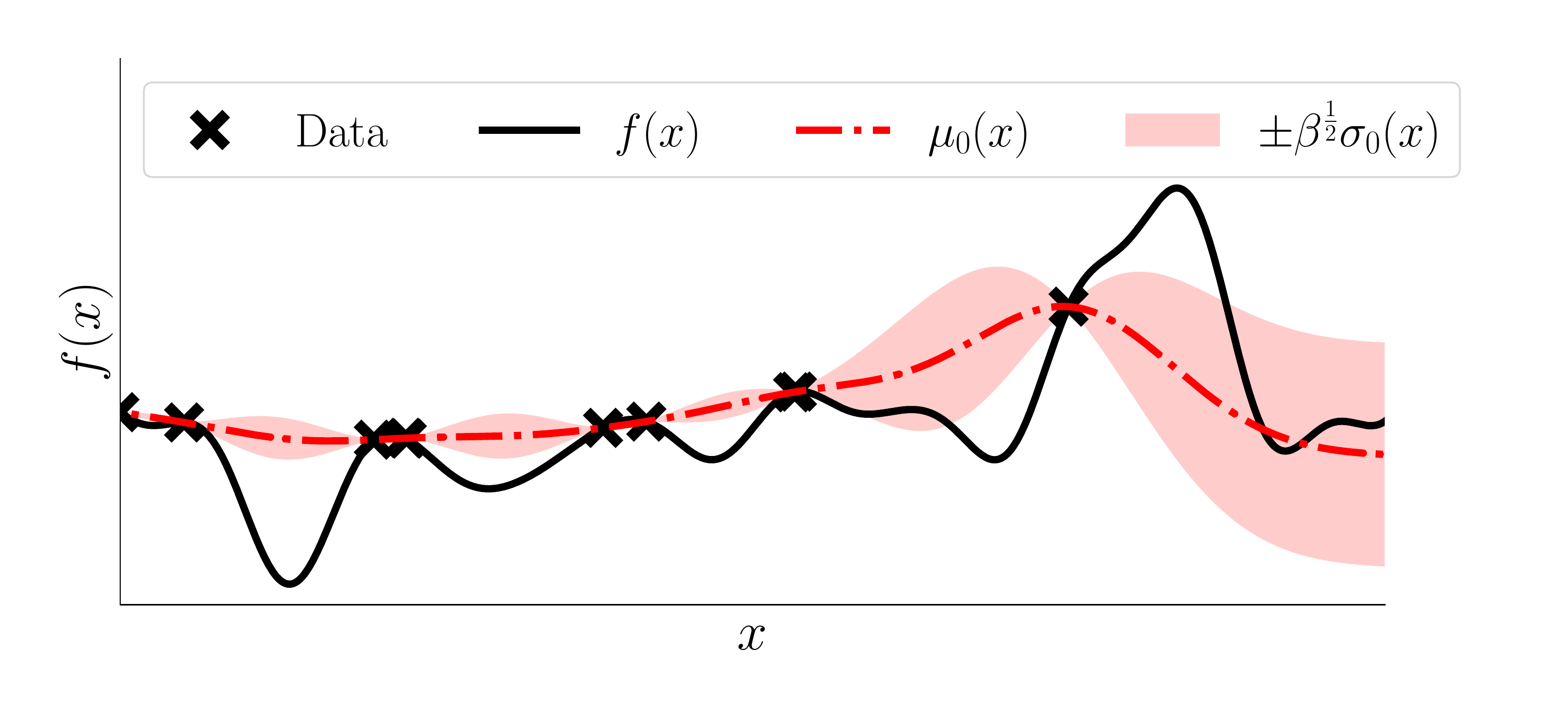}
	\caption{Overconfident uniform error bounds. The solid black line represents the unknown function, the dash-dotted line the GP mean. Crosses correspond to measurement data, the pink shaded area represents the estimated regression error. The estimated regression error increases too slowly away from the data due to the high lengthscale value and low signal variance. 
		As a result, most of $f(x)$ in regions with no data is not captured by the estimated regression error.} 
	\label{fig:overconfidentbounds}
\end{figure}

Uniform error bounds for Gaussian process regression are typically obtained by scaling the posterior standard deviation, where the scaling factor depends on the number of data, the input domain, as well as regularity assumptions on the unknown function \citep{Srinivas2012,chowdhury2017kernelized,maddalena2021deterministic,lederer2019uniform}.
The resulting error bound is generally of a probabilistic nature, i.e., it holds with high probability, with few exceptions \cite{maddalena2021deterministic,Wu1993}. 
In \citet{lederer2019uniform}, an error bound is derived under the assumption that the true function is drawn from a GP. The bound is achieved by estimating the error on a finite grid over the input space, and then extending the bound to the whole input space using Lipschitz properties of the GP. 
In \citet{Srinivas2012,chowdhury2017kernelized,maddalena2021deterministic}, error bounds are derived under the assumption that the underlying function belongs to a reproducing kernel Hilbert space. Additionally, error bounds from methods related to GPs can be directly exploited under weak restrictions. Since radial basis function interpolation yields regressors that are identical to the GP posterior mean for noise-free training data \cite{Kanagawa2018}, deterministic error bounds can be proven using the theory of reproducing kernel Hilbert spaces \cite{Wu1993, Wendland2004}. Moreover, error bounds from regularized kernel regression, as derived in \citet{Mendelson2002,Shi2013,Dicker2017}, can be straightforwardly extended to GPs due to the equivalence of regularized kernel and GP regression under weak assumptions \cite{Rasmussen2006}.

While using the posterior variance to estimate the model error can prove efficient in practice, this type of approach hinges on a critical assumption that seldom holds in practice, namely that the GP hyperparameters have been specified accurately. This is because the error bounds are derived by using smoothness assumptions about the GP kernel 
to bound the model error. 
Hence, if the choice of hyperparameters is poor, then the posterior variance is typically not a good measure of the model error. 
This potentially leads to overconfident error bounds, rendering the model inapplicable in safety-critical settings, as illustrated in \Cref{fig:overconfidentbounds}. 
%

\textbf{Related work.} This paper addresses GP uniform error bounds under misspecified kernel hyperparameters. 
To mitigate misspecified error bounds and guarantee no regret in a Bayesian optimization setting, \citet{berkenkamp2019no} developed an approach that gradually decreases the lengthscales of a GP, yielding convergence of the optimization algorithm towards the maximum of an unknown function. However, this is not useful for safety-critical settings, where the posterior variance, 
which varies strongly with the hyperparameters, is used to estimate the model error and determine the risk associated with decisions. For Matérn kernels, \citet{tuo2020kriging,wang2020prediction} have developed robust error bounds based on the fill distance under a misspecified smoothness parameter. Furthermore, \citet{tuo2020kriging} provides a more general bound based on upper and lower bounds for the decay rate of the true function. \citet{fiedler2021practical} have derived error bounds for settings where the norm of the difference between the kernel used for regression and that of the reproducing kernel Hilbert space (RKHS) of the true function can be bounded. Similarly, \cite{beckers2018misspecified} provides an error bound when choosing a covariance function from a predefined set, under the assumption that the unknown function corresponds to a GP with a kernel from the same set. A drawback of these approaches is that they do not provide a principled or intuitive approach for obtaining hyperparameter bounds or candidate kernels.

\textbf{Our contribution.} In this work, we mitigate the risk of making a poor choice of hyperparameters by equipping a Gaussian process with an error bound that accounts for the lack of prior knowledge with regard to the hyperparameters. We additionally present a principled way of choosing robust uniform error bounds in a Bayesian setting without requiring any prior upper and lower bounds for the hyperparameters.  
The corresponding theoretical guarantees hold for a class of frequently used kernels, extending the applicability of Gaussian processes in safety-critical settings. 

\section{Gaussian Processes}
\label{section:background}
\noindent In this section, we briefly review GPs and then discuss how error bounds and hyperparameters are chosen in practice.

We use GPs for regression, where we aim to infer an unknown function $f: \X \rightarrow \mathbb{R}$ with $\X\subseteq \mathbb{R}^d$. To this end, we treat function values $f(\bm{x})$ as random variables, of which any subset is jointly normally distributed \cite{Rasmussen2006}. 
A Gaussian process is fully specified by a mean function $m:  \X \rightarrow \mathbb{R}$ and a positive-definite kernel $k_{\bm{\vartheta}}: \X \times \X \rightarrow \mathbb{R}$. In this paper, we set $m=0$ without loss of generality, and restrict ourselves to  stationary kernels with radially non-increasing Fourier transform. This is specified by the following assumption.
\begin{assumption}
	\label{assumption:kernelassumption}
	The kernel $k_{\bm{\vartheta}}(\cdot,\cdot)$ is of the form
	\[k_{\bm{\vartheta}}(\bm{x},\bm{x}') \coloneqq k\left(\left(\frac{x_1-x_1'}{\vartheta_{1}}, \ldots, \frac{x_d-x_d'}{\vartheta_{d}}\right)^{\top}\right), \]
	for some $k:\mathbb{R}^d\rightarrow \mathbb{R}$ with $k(\bm{0}) = 1$. Furthermore, $k(\cdot)$ has Fourier transform $\hat{k}(\bm{\omega})  \coloneqq \kappa(\lVert \bm{\omega}\rVert_2)$ for some non-increasing non-negative function $\kappa: \mathbb{R}_+ \rightarrow \mathbb{R}_+$. 
\end{assumption}Here $\bm{\vartheta} \coloneqq (\vartheta_1, \ldots, \vartheta_d)^{\top} \in \bm{\varTheta}$ denotes the kernel lengthscales, which scale the kernel inputs and specify the variability of the underlying function. A similar assumption can be found in other papers that analyze algorithms based on stationary kernels \cite{bull2011convergence,berkenkamp2019no}. 
When using this type of kernel, the similarity between any two function evaluations decreases with the weighted distance between the corresponding inputs. Many frequently encountered kernels satisfy these properties. Examples include the Gaussian and Matérn kernels, which are often employed because they satisfy the universal approximation property \cite{micchelli2006universal}. Note that we assume $k(\bm{0},\bm{0}) = 1$ only for the sake of simplicity, and all the results and tools presented in this paper can be straightforwardly extended to the more general case where the scaling factor of $k(\cdot,\cdot)$, i.e., the signal variance, is also a hyperparameter. This is discussed in \Cref{subsection:extension}.  

Given a set of $N$ (potentially noisy) measurements $\mathcal{D} = \{\bm{x}_i, y_i\}_{i=1}^N\coloneqq \{\bm{X},\bm{y}\}$, where $y_i = f(\bm{x}_i) + \varepsilon_i$, $\varepsilon_i \sim \mathcal{N}(0, \sigma_n^2)$ and $\bm{x}_i \in \X$, we are able to condition a Gaussian process on $\mathcal{D}$ to obtain the posterior distribution of $f(\bm{x}^*)$ at an arbitrary test point $\bm{x}^* \in \X$. The corresponding distribution is normal $f(\bm{x}^*)\sim \mathcal{N}(\mu_{\bm{\vartheta}}(\bm{x}^*) ,\sigma_{\bm{\vartheta}}^2(\bm{x}^*) )$ 
with mean and variance
\begin{align*}
	\begin{split}
		\mu_{\bm{\vartheta}}(\bm{x}^*) &\coloneqq  \bm{k}_{\bm{\vartheta}}(\bm{x}^*)\transp \left(\bm{K}_{\bm{\vartheta}} + \sigma_n^2\Id\right) 
		^{-1} \bm{y} \\
		\sigma^2_{\bm{\vartheta}}(\bm{x^*}) &\coloneqq k_{\bm{\vartheta}}(\bm{x}^*,\bm{x}^*) - \bm{k}_{\bm{\vartheta}}(\bm{x}^*)\transp \left(\bm{K}_{\bm{\vartheta}} + \sigma_n^2\Id\right) ^{-1} \bm{k}_{\bm{\vartheta}}(\bm{x}^*),
	\end{split}
\end{align*}
where {$\bm{k}_{\bm{\vartheta}}(\bm{x}^*) = (k_{\bm{\vartheta}}(\bm{x}_1, \bm{x}^*),\ldots, k_{\bm{\vartheta}}(\bm{x}_N, \bm{x}^*))^{\top}$, $\sigma_n^2$ denotes the noise variance, and the entries of the covariance matrix $\bm{K}_{\bm{\vartheta}}$ are given by $(\bm{K}_{\bm{\vartheta}})_{i,j} = k_{\bm{\vartheta}}(\bm{x}_i,\bm{x}_j)$}. 
In the case of stationary kernels, the posterior variance $\sigma^2_{\bm{\vartheta}}(\bm{x^*}) $ is typically low for test inputs $\bm{x^*}$ that are close to the measurement data $\mathcal{D}$ and vice versa.


\subsection{Choosing Hyperparameters}
\label{subsection:hyperparametertuning}

By far the most common technique for choosing the hyperparameters of a Gaussian process is maximizing the log marginal likelihood of the measurements given the hyperparameters
\begin{align}
	\label{eq:loglikelihood}
	\begin{split}
		\log p(\bm{y} \vert \bm{X}, \bm{\vartheta}) = &-\frac{1}{2}\log{\vert \K_{\bm{\vartheta}} + \sigma_n^2 \Id \vert -\frac{N}{2}\log(2\pi)} \\ &- \frac{1}{2}\y^\intercal \left(\K_{\bm{\vartheta}} +\sigma_n^2 \Id\right)^{-1}\y .
	\end{split}
\end{align}

By maximizing \eqref{eq:loglikelihood}, we obtain a trade-off between model complexity and data fit. Coupled with the fact that the gradient of \eqref{eq:loglikelihood} can be computed analytically, choosing hyperparameters in this manner yields many practical benefits. Furthermore, if the marginal likelihood is well peaked, then the true hyperparameters are likely to be situated near the selected ones. However, if this is not the case, then this approach can lead to overconfident hyperparameters, as the marginal likelihood can decrease slowly away from the maximum, implying that the lengthscales are potentially smaller than the ones obtained. 
This type of behavior is particularly frequent if little data has been observed, as both short and long lengthscales explain the data consistently \cite{Rasmussen2006}. Less common approaches for choosing the Gaussian process hyperparameters include the cross validation \cite{cressie2015statistics} and log-pseudo likelihood maximization \cite{sundararajan2001predictive}. In some settings, the hyperparameters are chosen based on prior knowledge about the system \cite{kirschner2019adaptive}. 

\section{Uniform Error Bounds for Unknown Hyperparameters}
\label{section:safegaussianprocesses}

\noindent We now introduce a modified version of standard GP error bounds that aims to overcome the limitations mentioned in the previous chapter. The proofs of all results stated here can be found in the appendix. 
%

In the Gaussian process regression literature, there are typically two different types of assumptions that can be made for analysis. On the one hand, the Bayesian case can be considered, where we assume the unknown function $f(\cdot)$ to be sampled from a Gaussian process \cite{sun2021uncertain}. On the other hand, the frequentist setting can be considered, where $f(\cdot)$ is assumed to have a bounded RKHS norm \cite{Srinivas2012,chowdhury2017kernelized} with respect to the kernel $k_{\bm{\vartheta}}(\cdot,\cdot)$. While some of the techniques presented in this paper can be useful both in a Bayesian and frequentist setting, the Bayesian paradigm provides a straightforward way of narrowing the set of candidate hyperparameters used to derive an error bound, whereas no such path is evident in the frequentist setting. Hence, we henceforth assume a Bayesian setting, as described in the following assumption.

\begin{assumption}[Bayesian setting]
	\label{assumption:bayesiansetting}
	The unknown function $f(\cdot)$ corresponds to a sample from a Gaussian process with kernel $k_{\bm{\vartheta}}(\cdot,\cdot)$, where the hyperparameters $\bm{\vartheta}$ are drawn from a hyperprior $p(\bm{\vartheta})$.
\end{assumption}

\begin{figure}[t!]
	\centering
	\includegraphics[scale=0.28,trim={1.2cm 1.1cm 0 0},clip]{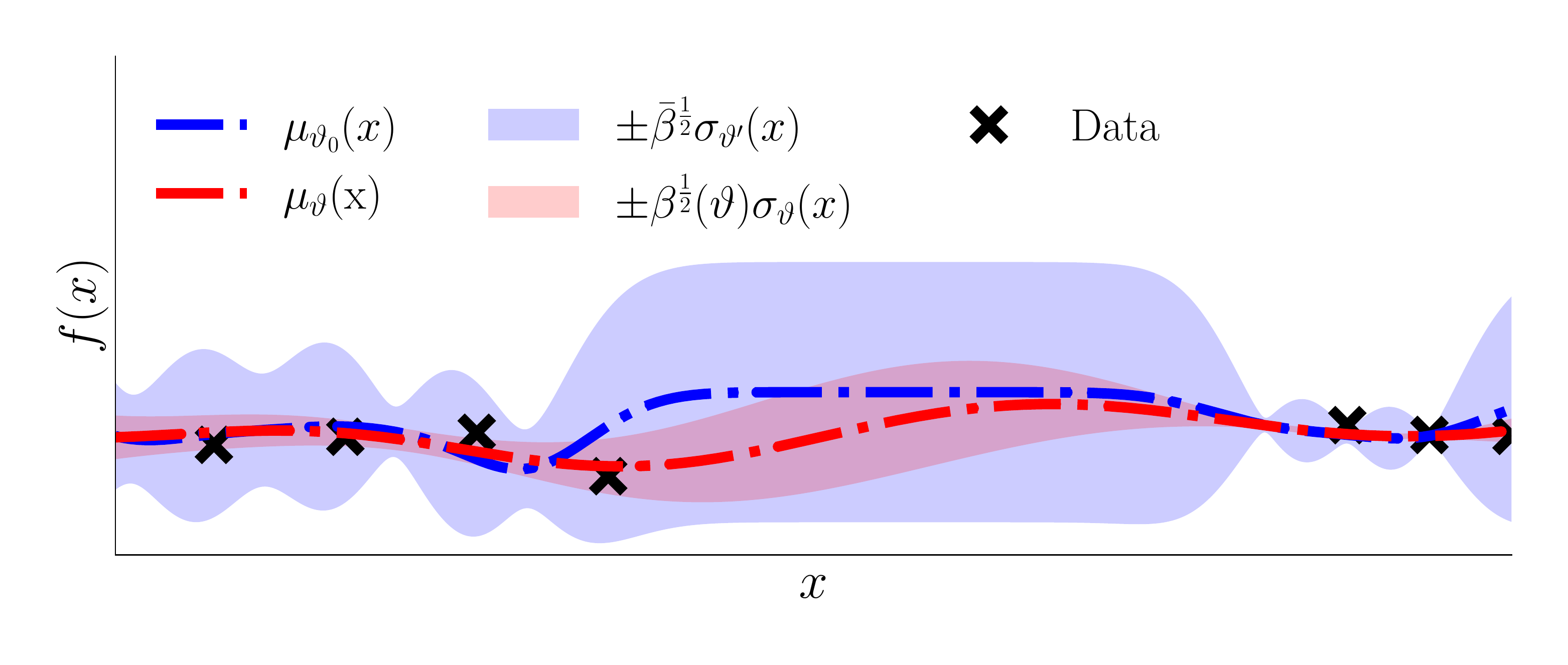}
	\caption{The uniform error bound obtained with bounding vectors $\bm{\vartheta}',\bm{\vartheta}''$ for a posterior mean with working hyperparameters $\bm{\vartheta}_0$, in blue/light blue, fully contains that of the Gaussian process with lengthscales $\bm{\vartheta}$, shown in red/light red, provided that $\bm{\vartheta}' \leq \bm{\vartheta},\bm{\vartheta}_0 \leq\bm{\vartheta}''$, where $\leq$ denotes component-wise inequality. This yields \Cref{theorem:errorwithknownbounds}.} 
	\label{fig:ToyProblemPredictionWithLearning}
\end{figure} 

This assumption is not very restrictive, and is often encountered in control and reinforcement learning settings \cite{deisenroth2015gaussian,kocijan2005dynamic,hewing2019cautious}. 
The choice of prior $p(\bm{\vartheta})$ can be based on any prior beliefs about the underlying function, e.g., Lipschitz continuity, which can be encoded into the prior using a chi-squared or uniform distribution.

We also assume to have a scaling function that specifies a uniform error bound for an arbitrary fixed vector of hyperparameters $\bm{\vartheta}$ with a prespecified probability $1-\rho$.

\begin{assumption}[Scaling function]
	\label{assumption:boundingfunctionbayes}
	For an arbitrary $\rho$ with $0<\rho<1$, there exists a known positive function $\beta: \mathbb{R}^d\rightarrow\mathbb{R}_+$, such that for any vector of lengthscales $\bm{\vartheta} \in \bm{\varTheta}$,
	\begin{align*}
		\text{P} \Big(\vert f(\bm{x}) - \mu_{\bm{\vartheta}}(\bm{x}) \vert \leq \beta^{\frac{1}{2}}(\bm{\vartheta}) \sigma_{\bm{\vartheta}}(\bm{x} ) \ \  \forall \bm{x} \in \X\Big) \geq 1-\rho
	\end{align*}
	holds, where $f(\bm{x})$ denotes a sample from a Gaussian process with prior mean zero and kernel $k_{\bm{\vartheta}}(\cdot,\cdot)$, conditioned on measurement data $\mathcal{D}$.
\end{assumption}
%

Error bounds for fixed hyperparameters $\bm{\vartheta}$ in the Bayesian case can be derived by assuming that the input space is compact and that the kernel satisfies some regularity requirements \cite{Srinivas2012}. Improved error bounds can be obtained if we are only interested in samples $f(\bm{x})$ that satisfy a predefined Lipschitz continuity requirement \cite{lederer2019uniform,sun2021uncertain}. Additionally, if we assume the input space to be finite, then $\beta(\cdot)$ is a constant, i.e., does not depend on $\bm{\vartheta}$ \cite{Srinivas2012}. 

Our approach is then based on the following result.
\begin{lemma}
	\label{lemma:sigmasdecreasewithlengthscale}
	Let $k(\cdot)$ be a kernel, $\bm{\vartheta}',\bm{\vartheta} '',\bm{\vartheta} \in \bm{\varTheta}$ vectors of lengthscales with  $\bm{\vartheta}'\leq \bm{\vartheta}\leq \bm{\vartheta}''$, where $\leq$ denotes component-wise inequality, and let $\mathcal{D}$ denote a measurement data set of an unknown function $f(\cdot)$. Furthermore, choose $\gamma>0$ with $\gamma^2 = \prod_{i=1}^{d} \frac{{\vartheta}''_i}{\vartheta'_i}$. 
	Then
	\begin{align*}
		\sigma_{\bm{\vartheta}}(\bm{x}) \leq \gamma \sigma_{\bm{\vartheta}'}(\bm{x})
	\end{align*}
	holds for all $\bm{x}\in \X$.
\end{lemma}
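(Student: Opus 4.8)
The plan is to first establish the stronger, kernel-level inequality that $\gamma^2 k_{\bm{\vartheta}'}-k_{\bm{\vartheta}}$ is a positive semidefinite kernel, and then to transfer this to posterior variances via the variational (Schur complement) characterization of $\sigma^2_{\bm{\vartheta}}(\bm{x})$. For the first part, I would work in the Fourier domain. Under \Cref{assumption:kernelassumption}, a change of variables in the $d$-dimensional Fourier integral gives $\hat{k}_{\bm{\vartheta}}(\bm{\omega}) = \big(\prod_{i=1}^{d}\vartheta_i\big)\,\hat{k}(\vartheta_1\omega_1,\dots,\vartheta_d\omega_d) = \big(\prod_{i=1}^{d}\vartheta_i\big)\,\kappa\big(\big(\sum_{i=1}^d \vartheta_i^2\omega_i^2\big)^{1/2}\big)$, and analogously for $\bm{\vartheta}'$. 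Since $\bm{\vartheta}'\leq\bm{\vartheta}$ componentwise we have $\big(\sum_i \vartheta_i'^2\omega_i^2\big)^{1/2}\leq\big(\sum_i \vartheta_i^2\omega_i^2\big)^{1/2}$, and as $\kappa$ is non-negative and non-increasing this yields $\kappa\big(\big(\sum_i \vartheta_i^2\omega_i^2\big)^{1/2}\big)\leq\kappa\big(\big(\sum_i \vartheta_i'^2\omega_i^2\big)^{1/2}\big)$. Multiplying by $\prod_i\vartheta_i$ and then bounding $\prod_i\vartheta_i\leq\prod_i\vartheta_i''$ (using $\bm{\vartheta}\leq\bm{\vartheta}''$ and $\kappa\geq 0$) gives $\hat{k}_{\bm{\vartheta}}(\bm{\omega})\leq\frac{\prod_i\vartheta_i}{\prod_i\vartheta_i'}\hat{k}_{\bm{\vartheta}'}(\bm{\omega})\leq\gamma^2\hat{k}_{\bm{\vartheta}'}(\bm{\omega})$ for all $\bm{\omega}\in\mathbb{R}^d$. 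Hence $\gamma^2\hat{k}_{\bm{\vartheta}'}-\hat{k}_{\bm{\vartheta}}$ is a non-negative integrable function, so by Bochner's theorem $\gamma^2 k_{\bm{\vartheta}'}-k_{\bm{\vartheta}}$ is a positive semidefinite kernel on $\X\times\X$. In particular, applying this to the inputs $\bm{x}_1,\dots,\bm{x}_N$ together with a test point $\bm{x}$, the $(N{+}1)\times(N{+}1)$ Gram matrix of $\gamma^2 k_{\bm{\vartheta}'}-k_{\bm{\vartheta}}$ is positive semidefinite, i.e.\ for every $\bm{u}\in\mathbb{R}^N$,
\[
\bm{u}^{\top}\bm{K}_{\bm{\vartheta}}\bm{u}+2\bm{u}^{\top}\bm{k}_{\bm{\vartheta}}(\bm{x})+k_{\bm{\vartheta}}(\bm{x},\bm{x})\ \leq\ \gamma^2\big(\bm{u}^{\top}\bm{K}_{\bm{\vartheta}'}\bm{u}+2\bm{u}^{\top}\bm{k}_{\bm{\vartheta}'}(\bm{x})+k_{\bm{\vartheta}'}(\bm{x},\bm{x})\big).
\]

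For the second part, I would use that the posterior variance is the Schur complement of $\bm{K}_{\bm{\vartheta}}+\sigma_n^2\Id$ in the joint covariance matrix, which admits the quadratic-minimization form $\sigma^2_{\bm{\vartheta}}(\bm{x}) = \min_{\bm{u}\in\mathbb{R}^N}\big(\bm{u}^{\top}(\bm{K}_{\bm{\vartheta}}+\sigma_n^2\Id)\bm{u}+2\bm{u}^{\top}\bm{k}_{\bm{\vartheta}}(\bm{x})+k_{\bm{\vartheta}}(\bm{x},\bm{x})\big)$, and likewise for $\bm{\vartheta}'$. Adding $\sigma_n^2\lVert\bm{u}\rVert_2^2$ to both sides of the displayed inequality and using $\gamma^2\geq 1$ (which holds since $\bm{\vartheta}'\leq\bm{\vartheta}''$, so $\sigma_n^2\lVert\bm{u}\rVert_2^2\leq\gamma^2\sigma_n^2\lVert\bm{u}\rVert_2^2$), one obtains that for every $\bm{u}\in\mathbb{R}^N$ the left-hand side is bounded above by $\gamma^2\big(\bm{u}^{\top}(\bm{K}_{\bm{\vartheta}'}+\sigma_n^2\Id)\bm{u}+2\bm{u}^{\top}\bm{k}_{\bm{\vartheta}'}(\bm{x})+k_{\bm{\vartheta}'}(\bm{x},\bm{x})\big)$. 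Taking the minimum over $\bm{u}$ on both sides (a pointwise inequality between functions of $\bm{u}$ passes to their minima) yields $\sigma^2_{\bm{\vartheta}}(\bm{x})\leq\gamma^2\sigma^2_{\bm{\vartheta}'}(\bm{x})$, and taking square roots — both quantities being non-negative — gives the claim for all $\bm{x}\in\X$.

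The step I expect to be the main obstacle is the Fourier argument: justifying the scaling identity for $\hat{k}_{\bm{\vartheta}}$ and, above all, correctly invoking Bochner's theorem to promote the pointwise inequality between spectral densities to positive semidefiniteness of $\gamma^2 k_{\bm{\vartheta}'}-k_{\bm{\vartheta}}$ as a kernel. The remaining manipulations are elementary, but the bookkeeping with the factor $\gamma^2$, the role of $\bm{\vartheta}''$ (which only enters to make $\gamma$ uniform over $\bm{\vartheta}\in[\bm{\vartheta}',\bm{\vartheta}'']$), and the noise term $\sigma_n^2\Id$ is easy to get backwards, so each inequality direction would need to be checked carefully.
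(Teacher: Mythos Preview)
Your proposal is correct and follows essentially the same two-step structure as the paper: a Fourier/Bochner argument showing $\gamma^2 k_{\bm{\vartheta}'}-k_{\bm{\vartheta}}$ is positive semidefinite (the paper's \Cref{lemma:kminusprodkispsd}, invoking Wendland's version of Bochner), followed by a Schur-complement comparison to pass to posterior variances. The only difference is cosmetic: the paper combines Rao's matrix inequality with the block-inverse formula (\Cref{lemma:invminusinvisps,lemma:blockmatrixinverse,lemma:covarianceinequality}) rather than your variational characterization of the Schur complement, and you handle the noise term $\sigma_n^2\Id$ more explicitly via $\gamma^2\geq 1$ than the paper does.
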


\begin{remark}
	\label{remark:smallergamma}
	 \Cref{lemma:sigmasdecreasewithlengthscale} is very general, which might lead to conservative scaling factors $\gamma$. In some settings, however, the inequality in \Cref{lemma:sigmasdecreasewithlengthscale} also holds for smaller values of $\gamma$. This is the case, e.g., for the commonly employed squared-exponential kernel, as illustrated in \Cref{section:results}.
\end{remark}

In essence, \Cref{lemma:sigmasdecreasewithlengthscale} states that the scaled posterior variance decreases with the lengthscales. As a direct consequence, if a pair of bounding hyperparameters $\bm{\vartheta}',\bm{\vartheta}''$ is available, we can use \Cref{lemma:sigmasdecreasewithlengthscale} to construct an error bound that contains all error bounds corresponding to lengthscales within then interval $\bm{\vartheta}' \leq \bm{\vartheta} \leq \bm{\vartheta}''$. This is specified in the following result.

%

\begin{theorem}
	\label{theorem:errorwithknownbounds}
	Let \Cref{assumption:boundingfunctionbayes} and \Cref{assumption:kernelassumption} hold, and let $\rho$ and $\beta(\cdot)$ be as in \Cref{assumption:boundingfunctionbayes}. Let $\bm{\vartheta}',\bm{\vartheta}'',\bm{\vartheta} \in \bm{\varTheta}$ be vectors of lengthscales with $\bm{\vartheta}' \leq \bm{\vartheta} \leq \bm{\vartheta}''$, where $\leq$ denotes component-wise inequality,  let $\mathcal{D}$ be a measurement data set, and let $f(\bm{x})$ be a sample from a Gaussian process with mean zero and kernel $k_{\bm{\vartheta}}(\cdot,\cdot)$, conditioned on $\mathcal{D}$. Furthermore, let $\mu_{\bm{\vartheta}_0}(\cdot)$ denote the posterior mean for arbitrary lengthscales $\bm{\vartheta}_0$ with $\bm{\vartheta}'\leq \bm{\vartheta}_0 \leq\bm{\vartheta}''$, and define 
	\begin{align}
		\label{eq:barbeta}
		\bar{\beta} = \gamma^2\left(\max\limits_{\bm{\vartheta}'\leq\bm{\vartheta}\leq \bm{\vartheta}''}\beta^{\frac{1}{2}}(\bm{\vartheta})  +  \frac{2 \lVert\bm{y}\rVert_2}{\sigma_n} \right)^2.
	\end{align} 
It then holds that
	\begin{align*}
		\text{P} \left(\vert f(\bm{x}) - \mu_{\bm{\vartheta}_0}(\bm{x}) \vert \leq {\bar{\beta}}^{\frac{1}{2}}  \sigma_{\bm{\vartheta}'} (\bm{x})\quad \forall \bm{x} \in \X \right) \geq 1-\rho.
	\end{align*}
\end{theorem}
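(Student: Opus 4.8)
The plan is to route the comparison through the ``matched'' posterior mean $\mu_{\bm{\vartheta}}(\cdot)$, i.e.\ the one built with the kernel $k_{\bm{\vartheta}}(\cdot,\cdot)$ from which $f$ is actually drawn, and then absorb the mismatch with $\mu_{\bm{\vartheta}_0}(\cdot)$ into the extra $2\lVert\bm{y}\rVert_2/\sigma_n$ term of $\bar\beta$. I would start from the pointwise triangle inequality
\[
  |f(\bm{x}) - \mu_{\bm{\vartheta}_0}(\bm{x})| \le |f(\bm{x}) - \mu_{\bm{\vartheta}}(\bm{x})| + |\mu_{\bm{\vartheta}}(\bm{x})| + |\mu_{\bm{\vartheta}_0}(\bm{x})|, \qquad \bm{x}\in\X.
\]
For the first summand, \Cref{assumption:boundingfunctionbayes} (legitimately invoked with hyperparameter $\bm{\vartheta}$, since $f$ is a GP sample with kernel $k_{\bm{\vartheta}}(\cdot,\cdot)$ conditioned on $\mathcal{D}$) yields an event of probability at least $1-\rho$ on which $|f(\bm{x}) - \mu_{\bm{\vartheta}}(\bm{x})| \le \beta^{1/2}(\bm{\vartheta})\,\sigma_{\bm{\vartheta}}(\bm{x})$ for all $\bm{x}$; \Cref{lemma:sigmasdecreasewithlengthscale} (applicable because $\bm{\vartheta}'\le\bm{\vartheta}\le\bm{\vartheta}''$) then upgrades the right-hand side to $\gamma\,\beta^{1/2}(\bm{\vartheta})\,\sigma_{\bm{\vartheta}'}(\bm{x}) \le \gamma\big(\max_{\bm{\vartheta}'\le\bm{\vartheta}\le\bm{\vartheta}''}\beta^{1/2}(\bm{\vartheta})\big)\sigma_{\bm{\vartheta}'}(\bm{x})$. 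This is the only place where randomness enters; all remaining steps are deterministic algebraic facts, so taking probabilities at the very end gives the claim.

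The crux is a bound on the magnitude of a GP posterior mean by its posterior standard deviation,
\[
  |\mu_{\bm{\vartheta}}(\bm{x})| \le \frac{\lVert\bm{y}\rVert_2}{\sigma_n}\,\sigma_{\bm{\vartheta}}(\bm{x}) \quad\text{for all }\bm{x}\in\X,
\]
and likewise with $\bm{\vartheta}_0$ in place of $\bm{\vartheta}$. Writing $\bm{M} = \bm{K}_{\bm{\vartheta}}+\sigma_n^2\Id$, one has $\mu_{\bm{\vartheta}}(\bm{x}) = \bm{y}\transp \bm{M}^{-1}\bm{k}_{\bm{\vartheta}}(\bm{x})$, so Cauchy--Schwarz gives $|\mu_{\bm{\vartheta}}(\bm{x})|^2 \le \lVert\bm{y}\rVert_2^2\,\bm{k}_{\bm{\vartheta}}(\bm{x})\transp \bm{M}^{-2}\bm{k}_{\bm{\vartheta}}(\bm{x})$, and it remains to show $\sigma_n^2\,\bm{k}_{\bm{\vartheta}}(\bm{x})\transp \bm{M}^{-2}\bm{k}_{\bm{\vartheta}}(\bm{x}) \le \sigma_{\bm{\vartheta}}^2(\bm{x})$. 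Since $\sigma_{\bm{\vartheta}}^2(\bm{x}) = k_{\bm{\vartheta}}(\bm{x},\bm{x}) - \bm{k}_{\bm{\vartheta}}(\bm{x})\transp\bm{M}^{-1}\bm{k}_{\bm{\vartheta}}(\bm{x}) = 1 - \bm{k}_{\bm{\vartheta}}(\bm{x})\transp\bm{M}^{-1}\bm{k}_{\bm{\vartheta}}(\bm{x})$ by the normalization $k(\bm{0})=1$ in \Cref{assumption:kernelassumption}, this reduces to $\bm{k}_{\bm{\vartheta}}(\bm{x})\transp(\sigma_n^2\bm{M}^{-2} + \bm{M}^{-1})\bm{k}_{\bm{\vartheta}}(\bm{x}) \le 1$. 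Diagonalizing $\bm{K}_{\bm{\vartheta}}$ with eigenvalues $\mu_j\ge 0$ and expressing $\bm{k}_{\bm{\vartheta}}(\bm{x})$ in the same eigenbasis with coordinates $c_j$, the left-hand side equals $\sum_j c_j^2\,(\mu_j+2\sigma_n^2)/(\mu_j+\sigma_n^2)^2$; the elementary inequality $(\mu_j+2\sigma_n^2)/(\mu_j+\sigma_n^2)^2 \le 1/\mu_j$ (equivalent to $0\le\sigma_n^4$) together with $\sum_j c_j^2/\mu_j = \bm{k}_{\bm{\vartheta}}(\bm{x})\transp \bm{K}_{\bm{\vartheta}}^{-1}\bm{k}_{\bm{\vartheta}}(\bm{x}) \le 1$ --- the latter being nonnegativity of the noise-free posterior variance, i.e.\ positive semidefiniteness of the kernel Gram matrix over $\{\bm{x}_1,\dots,\bm{x}_N,\bm{x}\}$ --- closes the estimate (a singular $\bm{K}_{\bm{\vartheta}}$ is handled identically with the pseudoinverse).

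With this in hand, I apply \Cref{lemma:sigmasdecreasewithlengthscale} once more to both $\bm{\vartheta}$ and $\bm{\vartheta}_0$ (both lie in $[\bm{\vartheta}',\bm{\vartheta}'']$), obtaining $\sigma_{\bm{\vartheta}}(\bm{x}),\,\sigma_{\bm{\vartheta}_0}(\bm{x}) \le \gamma\,\sigma_{\bm{\vartheta}'}(\bm{x})$, hence $|\mu_{\bm{\vartheta}}(\bm{x})| + |\mu_{\bm{\vartheta}_0}(\bm{x})| \le (2\gamma\lVert\bm{y}\rVert_2/\sigma_n)\,\sigma_{\bm{\vartheta}'}(\bm{x})$. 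Summing the three contributions shows that on the probability-$(1-\rho)$ event, $|f(\bm{x})-\mu_{\bm{\vartheta}_0}(\bm{x})| \le \gamma\big(\max_{\bm{\vartheta}'\le\bm{\vartheta}\le\bm{\vartheta}''}\beta^{1/2}(\bm{\vartheta}) + 2\lVert\bm{y}\rVert_2/\sigma_n\big)\,\sigma_{\bm{\vartheta}'}(\bm{x}) = \bar\beta^{1/2}\sigma_{\bm{\vartheta}'}(\bm{x})$ simultaneously for all $\bm{x}\in\X$, which is the assertion after taking probabilities. The main obstacle is the posterior-mean bound: the naive Cauchy--Schwarz route only gives $|\mu_{\bm{\vartheta}}(\bm{x})| \le (\lVert\bm{y}\rVert_2/\sigma_n)\sqrt{1-\sigma_{\bm{\vartheta}}^2(\bm{x})} \le \lVert\bm{y}\rVert_2/\sigma_n$, which is essentially a constant and cannot be folded into $\bar\beta^{1/2}\sigma_{\bm{\vartheta}'}(\bm{x})$; exploiting the $\sigma_n^2\bm{M}^{-2}$ regularization term to sharpen this to a $\sigma_{\bm{\vartheta}}(\bm{x})$-proportional estimate is the one genuinely non-routine computation.
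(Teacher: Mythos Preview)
Your proof is correct and lands on the same $\bar\beta$, but it takes a genuinely different route from the paper. The paper splits as $|f-\mu_{\bm{\vartheta}}| + |\mu_{\bm{\vartheta}} - \mu_{\bm{\vartheta}_0}|$ and bounds the mean \emph{difference} via two RKHS lemmas: the Srinivas inequality $|\mu(\bm{x})|^2 \le \sigma_{\bm{\vartheta}'}^2(\bm{x})\big(\lVert\mu\rVert_{k_{\bm{\vartheta}'}}^2 + \sigma_n^{-2}\sum_i \mu(\bm{x}_i)^2\big)$, applied to $\mu=\mu_{\bm{\vartheta}}-\mu_{\bm{\vartheta}_0}$ at the smallest lengthscale $\bm{\vartheta}'$, together with Bull's lengthscale-transfer bound $\lVert\mu\rVert_{k_{\bm{\vartheta}'}}^2 \le \big(\prod_i \vartheta_i/\vartheta_i'\big)\lVert\mu\rVert_{k_{\bm{\vartheta}}}^2$ to move the RKHS norms from $k_{\bm{\vartheta}},k_{\bm{\vartheta}_0}$ down to $k_{\bm{\vartheta}'}$. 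You instead split one step further into $|\mu_{\bm{\vartheta}}|+|\mu_{\bm{\vartheta}_0}|$ and bound each posterior mean against its \emph{matched} posterior standard deviation by the direct eigenvalue computation, transferring to $\sigma_{\bm{\vartheta}'}$ only at the very end via \Cref{lemma:sigmasdecreasewithlengthscale}. Your argument is more elementary and entirely self-contained---no RKHS norms, no Bull lemma---and your inequality $|\mu_{\bm{\vartheta}}(\bm{x})| \le (\lVert\bm{y}\rVert_2/\sigma_n)\,\sigma_{\bm{\vartheta}}(\bm{x})$ is in fact the sharp-constant version of what the Srinivas bound gives for $\mu=\mu_{\bm{\vartheta}}$. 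What the paper's route buys is interpretability: by working with $\mu_{\bm{\vartheta}}-\mu_{\bm{\vartheta}_0}$ directly it makes transparent that only the mean \emph{mismatch} is being absorbed into the $2\lVert\bm{y}\rVert_2/\sigma_n$ term, which is the basis for the paper's remark that this term can be replaced by a tighter empirical estimate of $|\mu_{\bm{\vartheta}}-\mu_{\bm{\vartheta}_0}|$ in practice.
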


\Cref{theorem:errorwithknownbounds} implies that we can obtain a uniform error bound using the scaled posterior variance corresponding to the smallest vector of lengthscales $\bm{\vartheta}'$. 
\begin{remark}
The term $2\sigma_n^{-1}\lVert\bm{y}\rVert_2$ in \eqref{eq:barbeta} originates from the discrepancy $\vert\mu_{\bm{\vartheta}}(\bm{x})- \mu_{\bm{\vartheta}_0}(\bm{x})\vert$ between the posterior means using the working lengthscales $\bm{\vartheta}_0$ and $\bm{\vartheta}$ within the bounding lengthscales $\bm{\vartheta}'\leq \bm{\vartheta}\leq \bm{\vartheta}''$. In practice, if this discrepancy is determined to be small, e.g., by evaluating the discrepancy $\vert\mu_{\bm{\vartheta}}(\bm{x})- \mu_{\bm{\vartheta}_0}(\bm{x})\vert$ for different $\bm{\vartheta}$ and test inputs $\bm{x}$, then $2\sigma_n^{-1}\lVert\bm{y}\rVert_2$ can be substituted by smaller values.
\end{remark} 
Note that the probabilistic nature of \Cref{theorem:errorwithknownbounds} stems from \Cref{assumption:boundingfunctionbayes} also being probabilistic, and no additional uncertainty is introduced when deriving \Cref{theorem:errorwithknownbounds}. In other words, the confidence region generated by the robust uniform error bound fully contains all confidence regions corresponding to $\bm{\vartheta}$ with $\bm{\vartheta}'\leq \bm{\vartheta} \leq \bm{\vartheta}''$. This is illustrated in \Cref{fig:ToyProblemPredictionWithLearning}.

In order to be able to apply \Cref{theorem:errorwithknownbounds} in the more general setting where the bounding vectors $\bm{\vartheta}',\bm{\vartheta}''$ are not given a priori, the next step is to determine permissible $\bm{\vartheta}',\bm{\vartheta}''$. Since we are in a Bayesian scenario, an intuitive approach is to choose the bounding lengthscales $\bm{\vartheta}',\bm{\vartheta}''$ such that they form a $1-\delta$ confidence interval for some $\delta \in (0,1)$. Note that deriving a similar technique in the frequentist setting, e.g., as in \citet{chowdhury2017kernelized} or \citet{Srinivas2012}, is not straightforward since the corresponding tools are not directly compatible with a distribution over the hyperparameters.

Formally, our approach consists of obtaining a pair of hyperparameters $\bm{\vartheta}',\bm{\vartheta}''$ that lies within the set
\begin{align}
	\label{eq:safeset}
	\mathcal{P}_{\delta} = \left\{(\bm{\vartheta}', \bm{\vartheta}'') \in \bm{\varTheta}^2 \ \Bigg\vert \int\limits_{ \bm{\vartheta}' \leq \bm{\vartheta}\leq \bm{\vartheta}''} \!\!\! p(\bm{\vartheta} \vert \mathcal{D} )d\bm{\vartheta} \geq 1- \delta \right\},
\end{align}
where the posterior over the lengthscales $\bm{\vartheta}$ given the data $\mathcal{D}=\{\bm{X},\bm{y}\}$ is computed using Bayes' rule as
\begin{align}
	p(\bm{\vartheta} \vert \bm{y}, \bm{X} ) = \frac{p(\bm{y} \vert \bm{X}, \bm{\vartheta}) p(\bm{\vartheta})}{p(\bm{y} \vert \bm{X})},
\end{align}
and the posterior $p(\bm{y} \vert \bm{X}, \bm{\vartheta})$ is computed similarly to \eqref{eq:loglikelihood}. The normalizing factor $p(\bm{y} \vert \bm{X})$ is then given by
\begin{align}
	\label{eq:normalizingfactor}
	p(\bm{y} \vert \bm{X}) = \int\limits_{\bm{\varTheta} } p(\bm{y} \vert \bm{X}, \bm{\vartheta}) p(\bm{\vartheta}) d\bm{\vartheta} .
\end{align}

By choosing the bounding lengthscales $\bm{\vartheta}',\bm{\vartheta}''
$ in this fashion, a vector of lengthscales $\bm{\vartheta}$ sampled from the posterior distribution $p(\bm{\vartheta}\vert \mathcal{D})$ lies within the interval $\bm{\vartheta}'\leq \bm{\vartheta}\leq\bm{\vartheta}''
$ with high probability.
We are then able to estimate the error in the setting with unknown hyperparameters by applying \Cref{theorem:errorwithknownbounds}, obtaining the following result.

\begin{figure}[t!]
	\centering
	\includegraphics[scale=0.3]{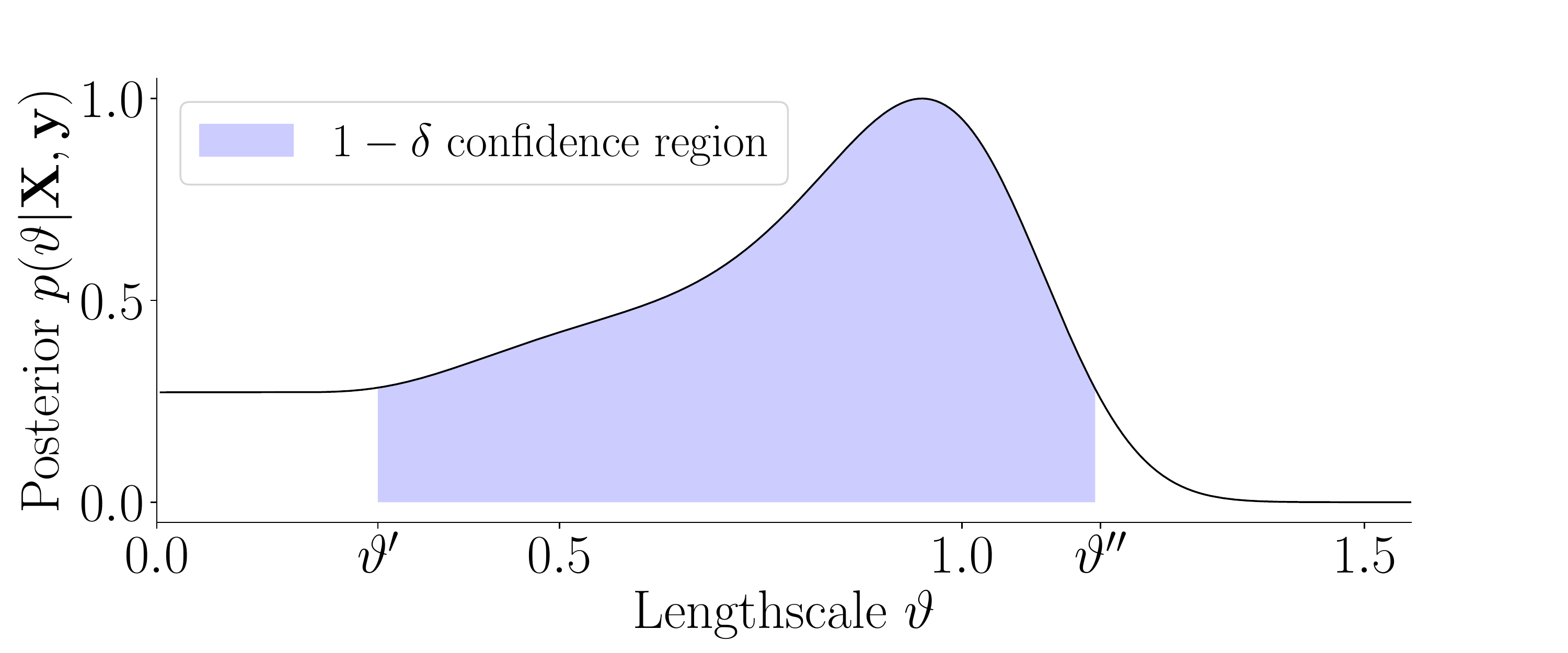}
	\caption{Posterior probability distribution $p(\vartheta \vert \bm{X},\bm{y})$ given data $\mathcal{D}=\{\bm{X},\bm{y}\}$ and confidence region of $1-\delta$ generated by bounding lengthscales $(\vartheta',\vartheta'')$. By applying \Cref{theorem:errorwithknownbounds}, we can use the pair $(\vartheta',\vartheta'')$ to obtain a robust error bound. This yields \Cref{theorem:mainresult}. Note that the posterior is poorly peaked, which indicates that the lengthscales obtained via log-likelihood maximization are potentially too high, leading to overconfident error bounds.} 
\end{figure}  

\begin{theorem}
	\label{theorem:mainresult}
	Let \Cref{assumption:boundingfunctionbayes} and \Cref{assumption:kernelassumption} hold. Let $\bm{\vartheta}',\bm{\vartheta}_0,\bm{\vartheta}'' \in \bm{\varTheta}$ be vectors of lengthscales with $(\bm{\vartheta}',\bm{\vartheta}'') \in \mathcal{P}_{\delta}$ and choose $\bar{\beta}$ as in \Cref{theorem:errorwithknownbounds}. Furthermore, let $\bm{\vartheta}\sim p(\bm{\vartheta}\vert\mathcal{D})$, and let $f(\cdot)$ denote a sample from a Gaussian process with kernel $k_{\bm{\vartheta}}(\cdot,\cdot)$ as specified by \Cref{assumption:bayesiansetting}.
	Then
	\begin{align}
		\label{eq:corollaryerrorbound}
		\begin{split}
			\vert f(\bm{x}) - \mu_{\bm{\vartheta}_0}(\bm{x}) \vert \leq   \bar{\beta}^{\frac{1}{2}}\sigma_{\bm{\vartheta}'}(\bm{x})
		\end{split}
	\end{align}
	holds for all $\bm{x} \in \X$ with probability $(1-\delta)(1-\rho)$.
\end{theorem}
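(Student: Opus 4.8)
The plan is to decompose the event in \eqref{eq:corollaryerrorbound} into two sub-events whose probabilities can be controlled separately, and then combine them via independence. First I would define the "good hyperparameter event" $E_1 = \{\bm{\vartheta}' \leq \bm{\vartheta} \leq \bm{\vartheta}''\}$, where $\bm{\vartheta} \sim p(\bm{\vartheta}\vert\mathcal{D})$ is the (random) true lengthscale of the sample path $f$. Since $(\bm{\vartheta}',\bm{\vartheta}'') \in \mathcal{P}_{\delta}$, the defining integral inequality in \eqref{eq:safeset} gives directly that $\mathrm{P}(E_1) = \int_{\bm{\vartheta}'\leq\bm{\vartheta}\leq\bm{\vartheta}''} p(\bm{\vartheta}\vert\mathcal{D})\,d\bm{\vartheta} \geq 1-\delta$. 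Second, conditioned on any fixed realization $\bm{\vartheta}$ of the lengthscale with $\bm{\vartheta}'\leq\bm{\vartheta}\leq\bm{\vartheta}''$, the sample $f$ is distributed as a draw from a zero-mean GP with kernel $k_{\bm{\vartheta}}$ conditioned on $\mathcal{D}$, so \Cref{theorem:errorwithknownbounds} applies verbatim and yields that the bound $\vert f(\bm{x}) - \mu_{\bm{\vartheta}_0}(\bm{x})\vert \leq \bar{\beta}^{1/2}\sigma_{\bm{\vartheta}'}(\bm{x})$ for all $\bm{x}\in\X$ holds with probability at least $1-\rho$. Call this event $E_2$.

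The main step is then combining these two estimates correctly. I would argue that $\mathrm{P}(E_2 \mid \bm{\vartheta}) \geq 1-\rho$ for \emph{every} admissible fixed $\bm{\vartheta}$ (in particular on the whole event $E_1$), so that $\mathrm{P}(E_2 \mid E_1) \geq 1-\rho$ by integrating the conditional probability over the conditional law of $\bm{\vartheta}$ restricted to $E_1$. Consequently
\begin{align*}
	\mathrm{P}(E_1 \cap E_2) = \mathrm{P}(E_1)\,\mathrm{P}(E_2\mid E_1) \geq (1-\delta)(1-\rho),
\end{align*}
and since $E_1\cap E_2 \subseteq \{\text{bound \eqref{eq:corollaryerrorbound} holds } \forall\bm{x}\}$ — indeed on $E_1$ the hypotheses of \Cref{theorem:errorwithknownbounds} are met, and on $E_2$ its conclusion holds — the claimed probability follows. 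A subtle point worth spelling out is that $\bar\beta$ as defined in \eqref{eq:barbeta} does not depend on the particular $\bm{\vartheta}$, only on $\bm{\vartheta}',\bm{\vartheta}''$ (through $\gamma$ and the maximum over the box) and on the data; hence the same constant $\bar\beta$ works simultaneously for all $\bm{\vartheta}$ in the box, which is exactly what lets the conditioning argument go through with a single, fixed right-hand side.

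The part requiring the most care is the probabilistic bookkeeping: one must be clear about which randomness is being conditioned on. The hyperprior and the data induce a joint law over $(\bm{\vartheta}, f)$; \Cref{assumption:boundingfunctionbayes} is a statement about the conditional law of $f$ given $\bm{\vartheta}$ (and $\mathcal{D}$), whereas membership in $\mathcal{P}_\delta$ is a statement about the marginal posterior of $\bm{\vartheta}$ given $\mathcal{D}$. The independence-type factorization $\mathrm{P}(E_1\cap E_2) = \mathrm{P}(E_1)\mathrm{P}(E_2\mid E_1)$ is just the tower property, but it is essential that the lower bound $1-\rho$ in \Cref{assumption:boundingfunctionbayes} is \emph{uniform} over $\bm{\vartheta}\in\bm{\varTheta}$, so that it survives averaging over the conditional distribution of $\bm{\vartheta}$ on $E_1$. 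I do not anticipate any genuine obstacle beyond stating this cleanly; the analytic content is entirely carried by \Cref{theorem:errorwithknownbounds} and \Cref{lemma:sigmasdecreasewithlengthscale}, and the remaining work is to assemble the two high-probability events.
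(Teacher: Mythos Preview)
Your proposal is correct and follows essentially the same approach as the paper: use the definition of $\mathcal{P}_\delta$ to get $\mathrm{P}(\bm{\vartheta}'\leq\bm{\vartheta}\leq\bm{\vartheta}'')\geq 1-\delta$, then invoke \Cref{theorem:errorwithknownbounds} conditionally and multiply the probabilities. Your write-up is in fact more careful than the paper's (which is a two-sentence sketch), particularly in making explicit that the bound $1-\rho$ in \Cref{assumption:boundingfunctionbayes} is uniform in $\bm{\vartheta}$ and hence survives the conditioning; the only cosmetic slip is the word ``independence'' in your opening sentence, whereas what you actually (and correctly) use is the tower property.
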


\subsection{Extension to Noise and Signal Variance}
\label{subsection:extension}
So far, we discussed the setting where the signal and noise variance are constant and known. However, the proposed tools extend straightforwardly to the setting where they are also unknown, provided that a corresponding prior distribution is also available. This is because the posterior variance $\sigma^2_{\bm{\vartheta}}(\cdot)$ increases with the noise and signal variance, hence a result of the form of \Cref{theorem:mainresult} can be easily obtained. 

\subsection{Choosing Bounding Hyperparameters}
Typically, the set of bounding pairs $\mathcal{P}_{\delta}$ contains more than one element. Hence, \Cref{theorem:mainresult} allows us some flexibility when deriving the uniform error bound, since we can choose any pair $(\bm{\vartheta}',\bm{\vartheta}'')$ that lies within $\mathcal{P}_{\delta}$. In the following, we derive a heuristic that aims to approximate the smallest possible $1-\delta$ confidence region around the working hyperparameters $\bm{\vartheta}_0$, which would automatically meet the requirements to apply \Cref{theorem:mainresult}. 
%
%
Formally, this is achieved by 
solving the optimization problem
\begin{align}
	\label{eq:optimizationproblem}
	\begin{split}
		\min \limits_{\bm{\vartheta}',\bm{\vartheta}''\in \bm{\varTheta}} & \qquad \lVert \bm{\vartheta}''-\bm{\vartheta}'\rVert_2 \\
		\text{s.t.} & \int\limits_{\bm{\vartheta}' \leq \bm{\vartheta} \leq \bm{\vartheta}''} \!\!\! p(\bm{\vartheta} \vert \mathcal{D} )d\bm{\vartheta} \geq 1- \delta.
	\end{split}
\end{align}
In other words, we aim to choose the pair of bounding hyperparameters $(\bm{\vartheta}',\bm{\vartheta}'')\in \mathcal{P}_{\delta} $ that deviate the least from $\bm{\vartheta}_0$. 



\subsection{Discussion}
\label{subsection:discussion}

Although the integrals in \Cref{eq:safeset} and \eqref{eq:normalizingfactor} generally cannot be computed analytically, we can resort to different approximations. In low-dimensional settings, we can solve the integral expression using numerical integration. A further option is to employ Markov chain Monte Carlo (MCMC) methods \cite{Rasmussen2006}. Alternatively, we can apply approximate inference methods, such as Laplace's method or expectation propagation, which aim to approximate the posterior $p(\bm{\vartheta} \vert \bm{y}, \bm{X} )$ with a normal distribution. The latter methods are particularly well suited for settings with a high number of data points, as they do not require a high number of evaluations of the posterior distribution. Note that this is an advantage of the proposed technique compared to fully Bayesian GPs, which typically require some form of MCMC approach. Moreover, after computing the bounding hyperparameters $\bm{\vartheta}',\bm{\vartheta}''$, the computational complexity of evaluating the error bound is only twice that of a standard GP, whereas in a fully Bayesian GP a potentially high number of GPs has to be evaluated per prediction. A further advantage compared to fully Bayesian GPs is that, in practice, it is reasonable to expect better error estimates, particularly if we choose a low risk parameter $\delta$ and a sufficiently smooth prior $p(\bm{\vartheta})$. This is because the marginal likelihood is typically well behaved in the hyperparameter space, and the fully Bayesian posterior mean and variance will lie within the uniform error bound computed with \Cref{theorem:mainresult}. This is supported by the experimental results in \Cref{section:results}. We note, however, that the proposed approach can also be combined with a fully Bayesian GP, i.e., by using the fully Bayesian posterior mean only for regression, and the presented techniques to bound the regression error.

It is also worth noting that there is no direct analogy between the approach presented in this paper and robust uniform error bounds in the frequentist setting, i.e., where the unknown function $f(\cdot)$ is assumed to be fixed with a bounded RKHS norm. We believe this to be a strong argument in favor of employing a Bayesian perspective instead of a frequentist one when performing regression.

\section{Control with Performance Guarantees}
\label{section:backstepping}

\noindent\Cref{theorem:mainresult} can be employed straightforwardly to derive safety guarantees in different settings. In the following, we show how to apply it in a learning-based control setting with a commonly encountered system structure.

\subsection{Control Problem}
\label{subsection:controlproblem}

We consider an $m$-dimensional dynamical system that obeys the frequently encountered strict-feedback form \cite{sabanovic1993sliding,krstic1995jetengine,Kwan2000}:
\begin{align}
	\label{eq:strictfeedbackform}
	\begin{split}
		\dot{x}_1 &= f_1(x_1) + g(x_1)x_2 \\
		\dot{x}_2 &= f_2(x_1,x_2) + g(x_1,x_2)x_3 \\
		\vphantom{A^{A^a}}\smash[tb]{\vdots} \\
		\dot{x}_m &= f_m(x_1,\ldots,x_m) + g(x_1,\ldots, x_m)u, \\
	\end{split}
\end{align}
where $x_1,\ldots,x_m \in \mathbb{R}$ and $u\in \mathbb{R}$ denote the system's states and control input, respectively. The functions $f_i: \mathbb{R}^i \rightarrow \mathbb{R}$ are unknown and modeled using GPs, whereas we assume to know $g_i: \mathbb{R}^i \rightarrow \mathbb{R}$. These assumptions are common in this setting \cite{Kwan2000,Capone2019BacksteppingFP}.

Our goal is to design a control law $u$ that steers the subsystem $\dot{x}_1 = f_1(x_1) + g(x_1)x_2 $ towards a desired time-dependent trajectory $x_d(t)$. In other words, we aim to reduce the norm of the time-dependent error $e_1(t) = x_1(t)-x_d(t)$. In the following, we assume that $x_d(t)$ is $m$ times continuously differentiable, and that all its derivatives are bounded, which is not a restrictive assumption. For the sake of simplicity, we henceforth use the notation $f_i \coloneqq f_i(x_1,\ldots,x_i)$ and $g_i \coloneqq g_i(x_1,\ldots, x_i)$.

In order to provide performance guarantees for \eqref{eq:strictfeedbackform}, we require some additional smoothness assumptions with respect to the kernels used to model the functions $f_i$. This is expressed formally in the following.
\begin{assumption}
	\label{assumption:kernelassumptionscontrol}
	Each function $f_i$ is drawn from a Gaussian process with $m-i$ times differentiable kernel $k_{\bm{\vartheta}_i}(\cdot,\cdot)$.
\end{assumption}
\Cref{assumption:kernelassumptionscontrol} implies that the functions $f_i$ are $m-i$ times differentiable, which is not a restrictive assumption, as it applies for many systems of the form given by \eqref{eq:strictfeedbackform}, e.g., robotic manipulators, jet engines, and induction motors \cite{Kwan2000}.

\subsection{Backstepping Control}
\label{subsection:backsteppingcontrol}
In order to track the desired trajectory $x_d(t)$, we employ a backstepping technique similar to the one proposed in \citet{Capone2019BacksteppingFP}. The idea behind backstepping is to recursively design fictitious control inputs $ x_{i+1,d}$ for each subsystem $\dot{x}_i = f_i + g_i x_{i+1,d}$, which we then aim to track by means of the true control input $u$. For more details, the reader is referred to \cite{krstic1995nonlinear}. 

To model the unknown functions $f_i(\cdot)$, we assume to have $N$ noisy measurements $\{x_{1\ldots i,j}, f_{i,j}\}_{j=1}^{N}$. The control input $u$ can then be computed recursively using
\begin{align}
	\label{eq:backsteppingcontrolinput}
	\begin{split}
		x_{2,d} &= g_1^{-1}\left(-\mu_1 + \dot{x}_d - C_1 e_1\right) \\
		\vphantom{A^{A^a}}\smash[tb]{\vdots} \\
		x_{i,d} &= g_{i-1}^{-1}\left(-\mu_{i-1} + \dot{x}_{i-1,d} - C_{i-2} e_{i-2}\right) \\
		\vphantom{A^{A^a}}\smash[tb]{\vdots} \\
		u &= g_m^{-1}\left(-\mu_m + \dot{x}_{m,d} - e_m - g_{m-1}e_{m-1}\right),
	\end{split}
\end{align}
where $\mu_i$ denotes the posterior mean of the $i$-th Gaussian process, $e_i = x_i - x_{i,d}$ the tracking error of the $i$-th subsystem, and $C_i: \mathbb{R}^i \rightarrow \mathbb{R}$ are state-dependent control gains.

The stability analysis of the closed-loop system is carried out using Lyapunov theory. By using a quadratic Lyapunov function, it is straightforward to show that there exists an ultimate upper bound $\xi$ for the tracking error $e_d(t)$, i.e., $e_d(t) \leq \xi$ holds for large enough $t$, where $\xi>0$ depends on the control gains $C_i$ \citep[Lemma 2]{Capone2019BacksteppingFP}. By using this result and \Cref{theorem:mainresult}, we are able to determine adaptive control gains $C_i$ that yield a tracking error smaller or equal than a predetermined value $\xi_{\text{des}}$, which is crucial in safety-critical applications. This is formally stated in the following result.

\begin{theorem}
	\label{theorem:backsteppingcontrol}
	Let \Cref{assumption:kernelassumption} and \Cref{assumption:kernelassumptionscontrol} hold, and let the control input $u$ be given by \eqref{eq:backsteppingcontrolinput}. Furthermore, let $(\bm{\vartheta}_1',\bm{\vartheta}''_1),\ldots,(\bm{\vartheta}'_m,\bm{\vartheta}''_m)$ be bounding hyperparameters for the subsystems $1,\ldots,m$, obtained with \eqref{eq:optimizationproblem} and risk parameter $\delta \in (0,1)$, and let $\beta_i(\cdot)$ be the corresponding scaling functions. Choose $\xi_{\text{des}}>0$ and let $\bar{\beta}_i$ be as in \Cref{theorem:errorwithknownbounds}, and choose each state-dependent control gain as 
	\begin{align}
		\label{eq:adaptivegains}
		C_i(\bm{x}) \coloneqq  \frac{1}{{\xi_{\text{des}}} } \sqrt{\sum\limits_{j=1}^m{\bar{\beta}_j} \sigma^2_{\bm{\vartheta}_j'}(\bm{x})},\end{align}
	where $\gamma_j$ are chosen as in \Cref{lemma:sigmasdecreasewithlengthscale}. Then, with probability $(1-\rho)^m(1-\delta)^m$, there exists a $T>0$, such that
	\begin{align}
		\Vert \bm{x}(t) - \bm{x}_d(t) \Vert_2 \leq \xi_{\text{des}}
	\end{align}
	holds for all $t>T$.
\end{theorem}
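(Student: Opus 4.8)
The plan is to first establish the $m$ model-error bounds simultaneously with \Cref{theorem:mainresult} and then insert the adaptive gains \eqref{eq:adaptivegains} into a backstepping Lyapunov argument. \emph{Probabilistic step.} I would apply \Cref{theorem:mainresult} separately to each subsystem $j\in\{1,\dots,m\}$: \Cref{assumption:kernelassumption} and \Cref{assumption:kernelassumptionscontrol} make it applicable, and with the bounding hyperparameters $(\bm{\vartheta}_j',\bm{\vartheta}_j'')\in\mathcal{P}_\delta$ returned by \eqref{eq:optimizationproblem} it gives that
\begin{align*}
	E_j:=\Big\{\,|f_j(\bm{x})-\mu_j(\bm{x})|\le\bar{\beta}_j^{1/2}\sigma_{\bm{\vartheta}_j'}(\bm{x})\ \ \forall\,\bm{x}\,\Big\}
\end{align*}
holds with probability at least $(1-\delta)(1-\rho)$. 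Since the $m$ data sets and GP priors are independent, the $E_j$ are independent, so $\mathrm{P}\big(\bigcap_{j=1}^m E_j\big)\ge\big((1-\delta)(1-\rho)\big)^m=(1-\rho)^m(1-\delta)^m$, and the remainder is carried out conditionally on $\bigcap_j E_j$ and is deterministic.

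\emph{Lyapunov step.} I would use $V:=\tfrac12\sum_{i=1}^m e_i^2=\tfrac12\lVert\bm{e}\rVert_2^2$ with $e_i:=x_i-x_{i,d}$ and $\bm{e}=(e_1,\dots,e_m)^\top$. Substituting \eqref{eq:backsteppingcontrolinput} into $\dot e_i=\dot x_i-\dot x_{i,d}$ shows that each $\dot e_i$ is the sum of the model error $f_i(\bm{x})-\mu_i(\bm{x})$, the damping term $-C_i(\bm{x})e_i$, and interconnection terms of the type $g_ie_{i+1}$ and $-g_{i-1}e_{i-1}$, the latter cancelling pairwise in $\dot V=\sum_i e_i\dot e_i$ — exactly the cancellation behind \citep[Lemma~2]{Capone2019BacksteppingFP}. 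Hence
\begin{align*}
	\dot V\le-\sum_{i=1}^m C_i(\bm{x})e_i^2+\sum_{i=1}^m|e_i|\,\big|f_i(\bm{x})-\mu_i(\bm{x})\big|.
\end{align*}

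\emph{Combining.} On $\bigcap_j E_j$ I would bound $|f_i(\bm{x})-\mu_i(\bm{x})|\le\bar{\beta}_i^{1/2}\sigma_{\bm{\vartheta}_i'}(\bm{x})$ and apply Cauchy--Schwarz, $\sum_i|e_i|\bar{\beta}_i^{1/2}\sigma_{\bm{\vartheta}_i'}(\bm{x})\le\lVert\bm{e}\rVert_2\big(\sum_i\bar{\beta}_i\sigma^2_{\bm{\vartheta}_i'}(\bm{x})\big)^{1/2}$. Because \eqref{eq:adaptivegains} makes $C_i(\bm{x})$ independent of $i$, writing $C(\bm{x}):=\xi_{\text{des}}^{-1}\big(\sum_j\bar{\beta}_j\sigma^2_{\bm{\vartheta}_j'}(\bm{x})\big)^{1/2}$ gives $\sum_i C_i(\bm{x})e_i^2=C(\bm{x})\lVert\bm{e}\rVert_2^2$ and therefore
\begin{align*}
	\dot V\le-C(\bm{x})\lVert\bm{e}\rVert_2^2+\xi_{\text{des}}\,C(\bm{x})\lVert\bm{e}\rVert_2=C(\bm{x})\lVert\bm{e}\rVert_2\big(\xi_{\text{des}}-\lVert\bm{e}\rVert_2\big).
\end{align*}
Since $C(\bm{x})\ge0$, and $C(\bm{x})=0$ forces $f_i\equiv\mu_i$ on $\bigcap_j E_j$ so that $\dot V\le0$ anyway, this shows $\dot V<0$ whenever $\lVert\bm{e}\rVert_2>\xi_{\text{des}}$, i.e. whenever $V>\xi_{\text{des}}^2/2$. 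A standard ultimate-boundedness argument (as in \citep[Lemma~2]{Capone2019BacksteppingFP}) then yields $T>0$ with $V(t)\le\xi_{\text{des}}^2/2$, i.e. $\lVert\bm{x}(t)-\bm{x}_d(t)\rVert_2=\lVert\bm{e}(t)\rVert_2\le\xi_{\text{des}}$, for all $t>T$.

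\emph{Main obstacle.} The bound of \Cref{theorem:mainresult} is only guaranteed on the domain on which \Cref{assumption:boundingfunctionbayes} is valid, so the delicate point is guaranteeing that the closed-loop state never leaves that domain; this forward-invariance must be secured before the combining step can be invoked along the realised trajectory. The only other non-routine item is verifying the exact pairwise cancellation of the interconnection terms for the recursion \eqref{eq:backsteppingcontrolinput}, which is inherited from \citep{Capone2019BacksteppingFP}.
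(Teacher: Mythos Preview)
Your proposal is correct and follows essentially the same route as the paper: the paper writes the closed-loop error dynamics in the compact form $\dot{\bm{e}}=\Delta\bm{f}-(\bm{C}+\bm{G})\bm{e}$, uses the same Lyapunov function $V=\tfrac12\bm{e}^\top\bm{e}$, exploits $\bm{e}^\top\bm{G}\bm{e}=0$ (your ``pairwise cancellation''), applies Cauchy--Schwarz to $\bm{e}^\top\Delta\bm{f}$, and then invokes \Cref{theorem:mainresult} for each subsystem to obtain $\dot V\le \lVert\bm{e}\rVert_2\,C_i(\bm{x})(\xi_{\text{des}}-\lVert\bm{e}\rVert_2)$ with probability $(1-\rho)^m(1-\delta)^m$. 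The only cosmetic difference is that you work componentwise while the paper uses the matrix formulation from \citep{Capone2019BacksteppingFP}; your explicit observation that all $C_i$ coincide is exactly what the paper's use of $\min_i C_i(\bm{x})$ reduces to here.
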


Hence, we can use \eqref{eq:adaptivegains} to obtain the control gains required for the control performance specification $\xi_{\text{des}}$.

\section{Results}
\label{section:results}

\begin{figure*}[t!]
	\centering
	\includegraphics[scale=0.3]{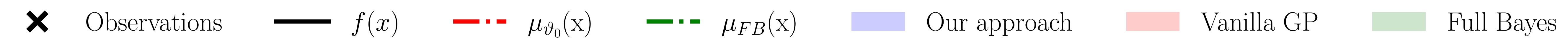}
	\subfigure[$N=2$ data points.]{\label{fig:robustgpn2}\includegraphics[width=.32\textwidth]{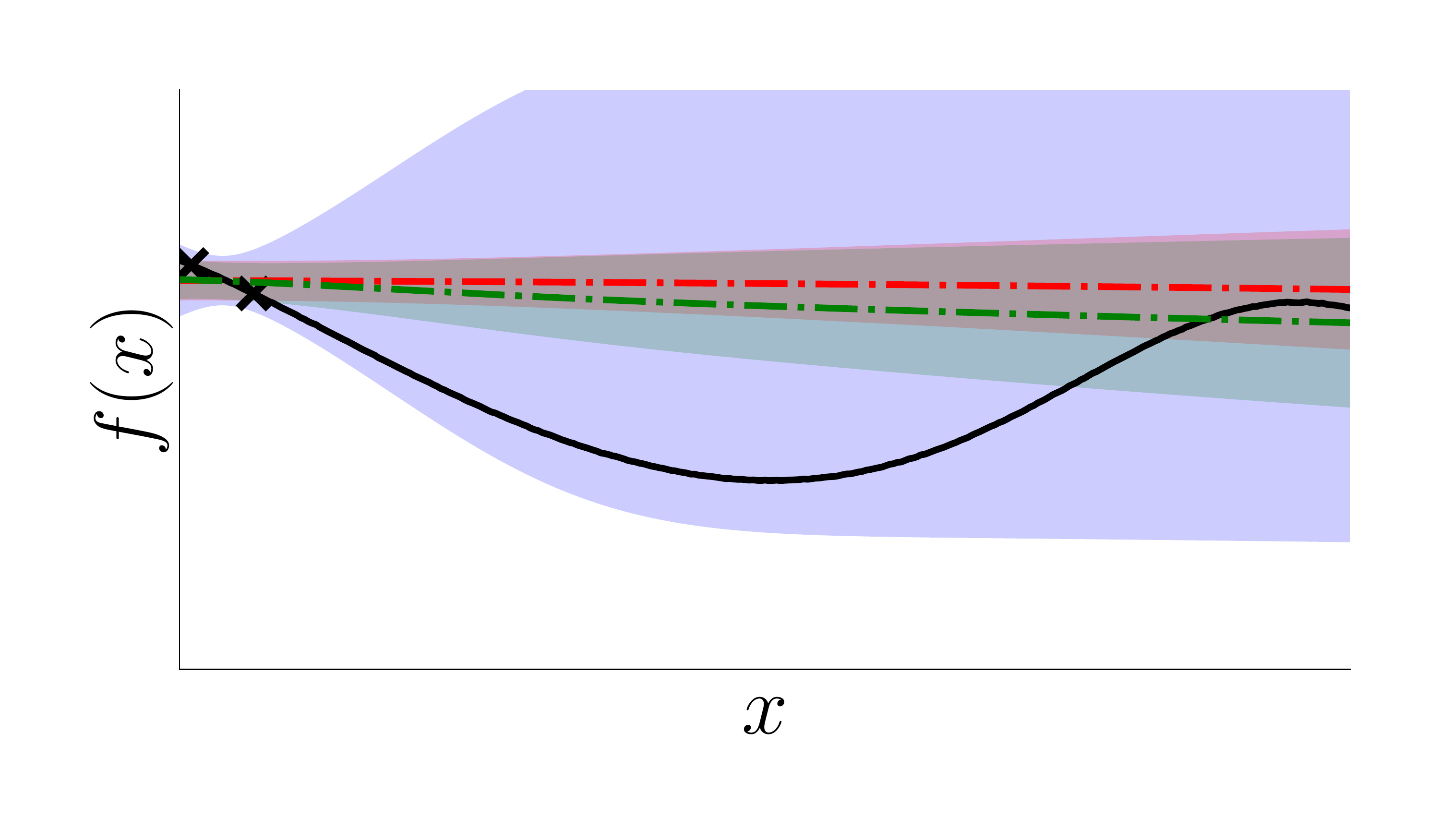}}
	\hfill
	\subfigure[$N=4$ data points.]{\label{fig:robustgpn4}\includegraphics[width=.32\textwidth]{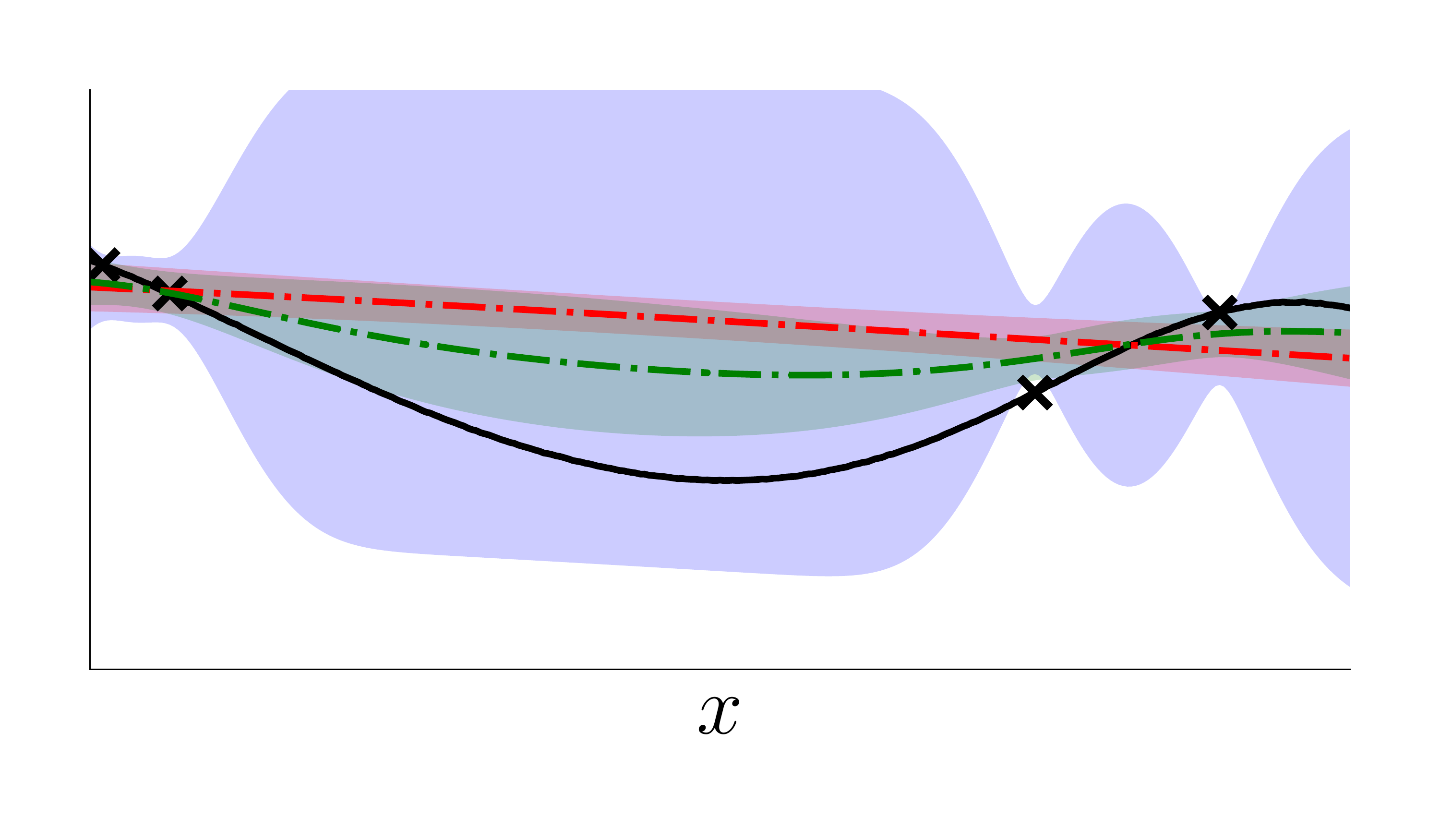}}
	\hfill
	\subfigure[$N=6$ data points.]{\label{fig:robustgpn6}\includegraphics[width=.32\textwidth]{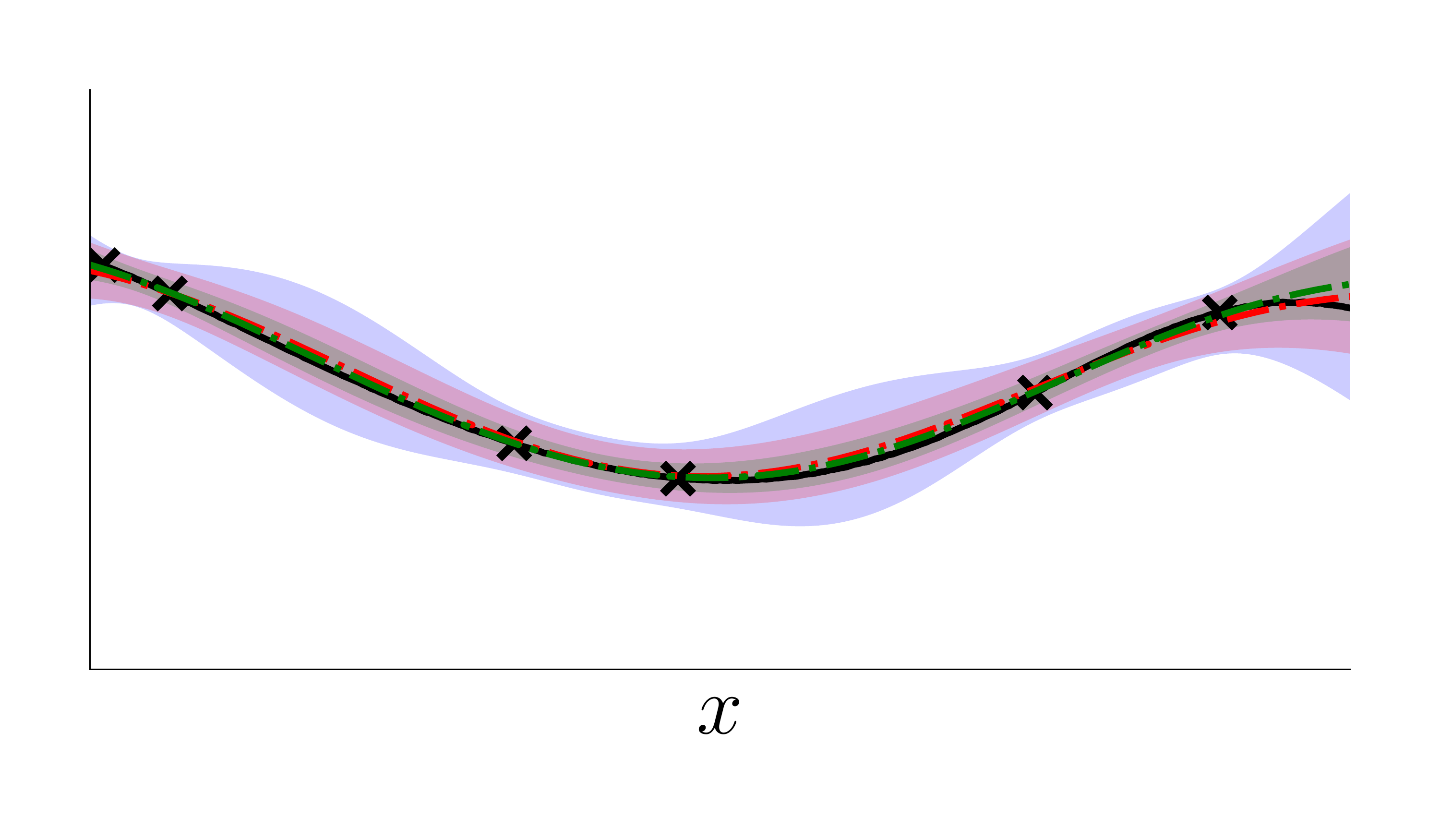}}
	\caption{Function $f(\cdot)$ sampled from a GP. Mean and estimated regression error when using our approach (blue), a vanilla GP (red), and a fully Bayesian GP (green). We used scaling parameters of $\beta = \bar{\beta} = 2$. Both the vanilla GP and the fully Bayesian GP fail to correctly estimate most of the regression error unless the input space is sufficiently covered by the data.} 
	\label{fig:samplefromgp}
\end{figure*}

\begin{table*}[t!]
	\vskip 0.15in
	\begin{center}
		\begin{small}
			\begin{sc}
				\begin{tabular}{lccccccccc}
					\toprule
					Data set& BSTN$_{50}$& BSTN$_{450}$ & ML$_{50}$& ML$_{300}$& Wine$_{200}$ & Wine$_{1000}$ &
					SRCS$_{850}$ &
					SRCS$_{5000}$ & SRCS$_{10000}$  \\
					\midrule
					\midrule
					Dimension& $d=13$& $d=13$& $d=1$ & $d=1$ & $d=11$ &
					$d=11$ & $d=21$ & $d=21$ & $d=21$ \\
					\midrule
					Our approach   &\textbf{0.19}& \textbf{0.35}&
					\textbf{0.00} & \textbf{0.00}& \textbf{0.01} & \textbf{0.01} &
					\textbf{0.08} & 
					\textbf{0.02}   & \textbf{0.01} \\ 
					\midrule
					Vanilla GP    &0.41& 0.48&
					0.11 & {0.01}& 
					0.04 &0.04  & 
					0.98 &
					0.98 & 0.97  \\ 
					\midrule
					Full Bayes    &0.36& 0.44& 
					\textbf{0.00} & \textbf{0.00}& 
					0.04 & 0.04 & 
					0.76 &
					- & -  \\
					\bottomrule
				\end{tabular}
			\end{sc}
		\end{small}
	\end{center}
	\vskip -0.1in
	\caption{Average rate of error bound violations using the proposed approach, vanilla GPs, and fully Bayesian GPs. Lower is better. We set $\bar{\beta}=\beta =2$ to avoid overly conservative error estimates while enabling a fair comparison with high practical relevance. BSTN stands for Boston (house prices), ML for Mauna Loa, and SRCS for Sarcos. The subscripts in the data set names indicate the number of training data points used. For the two largest Sarcos data sets, 
		the number of samples required to get reliable fully Bayesian models is prohibitive, hence we do not provide corresponding predictions. 
		Best-performing methods are in boldface. Our approach performs best across all scenarios.}
	\label{table:performance}
\end{table*}

\noindent We now present experimental results where the performance of the proposed error bound is compared to that of vanilla and fully Bayesian GPs\footnote{The corresponding code can be found at https://github.com/aCapone1/gauss\_proc\_unknown\_hyp.}. We first showcase the prediction error on regression benchmarks, then apply the proposed technique to design a learning-based control law. A Gaussian kernel is used in all cases except the Mauna Loa experiments, and we also consider uncertainty in the signal and noise variances, as discussed in \Cref{section:safegaussianprocesses}. In all except the Sarcos experiments, the fully Bayesian GPs and integral expression in \eqref{eq:optimizationproblem} are approximated using the No-U Turn Sampler algorithm for MCMC \cite{hoffman2014no}. The implementations are carried out using GPyTorch \cite{gardner2018gpytorch}.

The values for $\beta$ proposed in theory are typically very conservative \cite{Srinivas2012}, which makes their use impractical. Similarly, the scaling factor $\bar{\beta}$ proposed by \Cref{theorem:mainresult} assumes very high values. In many safety-critical applications it is common to choose the scaling factor as $\beta = 2$, independently of the hyperparameters $\bm{\vartheta}$ \cite{berkenkamp2017safe,umlauft2017feedback}. Moreover, empirical results indicate that \Cref{lemma:sigmasdecreasewithlengthscale} holds with $\gamma = 1$ in the case of a Gaussian kernel, i.e., the error bound will be at least as robust as that of the vanilla GP for $\beta=\bar{\beta}$. Hence, to obtain a fair, practically relevant and not overly conservative comparison, we set $\bar{\beta} = \beta = 2$ in the following experiments. Note that we still apply the results from \Cref{theorem:mainresult} to compute the posterior variance $\sigma_{\bm{\vartheta}'}(\cdot)$.  
The working hyperparameters $\bm{\vartheta}_0$ are chosen by maximizing the log marginal likelihood \eqref{eq:loglikelihood}. We employ a confidence parameter of $\delta=0.05$ for \eqref{eq:optimizationproblem} and assume uniform distributions as hyperpriors $p(\bm{\vartheta})$. Note that even though this imposes hard bounds on the hyperparameters, these are still very large.

\subsection{Regression - Toy Problem and Benchmarks}

In the regression experiments, to additionally illustrate the behavior of the proposed technique as more data becomes available, we train the GPs with data sets of {varying sizes~$N$}.

We first investigate how the proposed technique performs when estimating the regression error of a function $f(\cdot)$ that is sampled from a GP. The results can be seen in \Cref{fig:samplefromgp}. Our approach always captures the behavior of  the underlying sample $f(\cdot)$. Both the vanilla GP error bound and the fully Bayesian error bound fail to do so if little data is available (\Cref{fig:robustgpn2}) or if the data is sparse (\Cref{fig:robustgpn4}). The estimated error is only accurate for the vanilla and fully Bayesian GPs if the data covers the state space sufficiently well (\Cref{fig:robustgpn6}).

We now apply the proposed technique to estimate the regression error of the Boston house prices data set \cite{scikit-learn}, the UCI wine quality data set \cite{cortez2009modeling}, the Mauna Loa CO$_2$ time series, and the Sarcos data set. For the Mauna Loa data set, we employ a spectral mixture kernel with $20$ mixtures \cite{wilson2013gaussian}, which consists of a sum of $20$ Gaussian kernels multiplied with sinusoidal kernels. It is straightforward to show that its Fourier transform increases with the lengthscales, hence we are able to apply \Cref{theorem:mainresult}. The mixture means, which specify the frequency of the periodic components, are assumed to be fixed except for the fully Bayesian case. For the two largest Sarcos data sets, we perform a Laplace approximation around the maximum of the posterior, which yields a normal distribution \cite{mackay2002information}. To ensure that the corresponding covariance matrix is positive definite, we employ an empirical Bayes approach and specify a quadratic hyperprior around the log likelihood maximum. The bounding hyperparameters are then obtained by computing the corresponding rectangular confidence region as in \citet{vsidak1967rectangular}.

We run each scenario multiple times and select the $N$ training and $N_{\text{test}}$ test points randomly every time. The Sarcos experiments are repeated $10$ times due to high computational requirements, all other experiments are repeated $100$ times. To evaluate performance, we check how often the error bound is violated, which corresponds to measuring the quantity
\begin{equation}
	\label{eq:errorratemetric}
	\frac{1}{N_{\text{test}}}{\sum\limits_{m=1}^{N_{\text{test}}} \mathbb{I}\left( \vert y^{\text{test}}_m - \mu_{\bm{\vartheta}_0}(\bm{x}_m^{\text{test}}) \vert - {\beta}^{\frac{1}{2}}\sigma_{\bm{\vartheta}'}(\bm{x^{\text{test}}_m}) \right)},
\end{equation}
where the superscript test denotes test inputs/outputs, and \[\mathbb{I}(z)=\begin{cases}
	0, & \text{if} \ z\leq 0 \\
	1,& \text{otherwise}
\end{cases}\] is the indicator function. This metric\footnote{ \citet{kuleshov2018accurate} employ a similar metric to determine the calibration error in regression settings.} is highly relevant, since the theoretical guarantees of many GP-based Bayesian optimization and safe control algorithms hinge on the assumption that \eqref{eq:errorratemetric} is equal to zero with high probability \cite{Srinivas2012,berkenkamp2017safe,umlauft2017feedback,Capone2019BacksteppingFP,lederer2019uniform}. In practice, most such algorithms have been shown to provide satisfactory results even if this requirement is not strictly enforced. However, it is still highly desirable for \eqref{eq:errorratemetric} to be as small as possible, as this reduces the amount of potentially unsafe choices.
%

The results are summarized in \Cref{table:performance}. As can be seen, our bound is always more accurate than that of vanilla or fully Bayesian GPs, particularly in low-data regimes. This is to be expected, as the marginal likelihood function is known to be poorly peaked for small $N$ \cite{6790802}. 

\subsection{Control Design with Little Data}

We now apply the proposed technique to design a control law for a safety-critical setting, where a one-link planar manipulator with motor dynamics is to be steered towards the origin using the method presented in \Cref{section:backstepping} and $N=10$ training data points. Note that $N=10$ is not very small for control purposes in the proposed setting, since good performance can already be achieved with as little as $N=50$ data points \cite{Capone2019BacksteppingFP}. The data is obtained using a low-gain sinusoidal input. 

The manipulator dynamics are given by
\begin{align*} D\ddot{\varphi}  + B\dot{\varphi} + G \sin(\varphi) =& \tau \\
	M + \dot{\tau} + H \tau + Z \dot{\varphi} = & u,
\end{align*}
where $\varphi$ and $\tau$ are the system's angle and torque, respectively, $u$ is the motor voltage, which we can control directly, and $D$, $B$, $G$, $M$, $H$, $Z$ are system parameters. By approximating the differential equation of each state $\varphi$, $\dot{\varphi}$, $\tau$ using a separate Gaussian process, we are able to employ a backstepping technique to track a desired trajectory, as described in \Cref{section:backstepping}. We then use \Cref{theorem:backsteppingcontrol} to choose the control gains $C_i(\bm{x})$, $i=1,\ldots,3$, by setting the desired error to $\epsilon_{\max} =  1$. 


To evaluate the control performance, we run $100$ simulations for each setting, where the initial conditions of the manipulator are randomly sampled from a normal distribution. The norm of the resulting tracking error is displayed in \Cref{fig:trackingerrornorm}. As can be seen, the gains obtained using the proposed approach perform considerably better than the ones obtained with vanilla and fully Bayesian GPs. This is because the posterior variance obtained with our approach increases more rapidly away from the collected data points than in the other settings, expressing less confidence in the posterior mean. 


\begin{figure}[t!]
	\centering
	\includegraphics[scale=0.285,trim={0 1.2cm 0 0},clip]{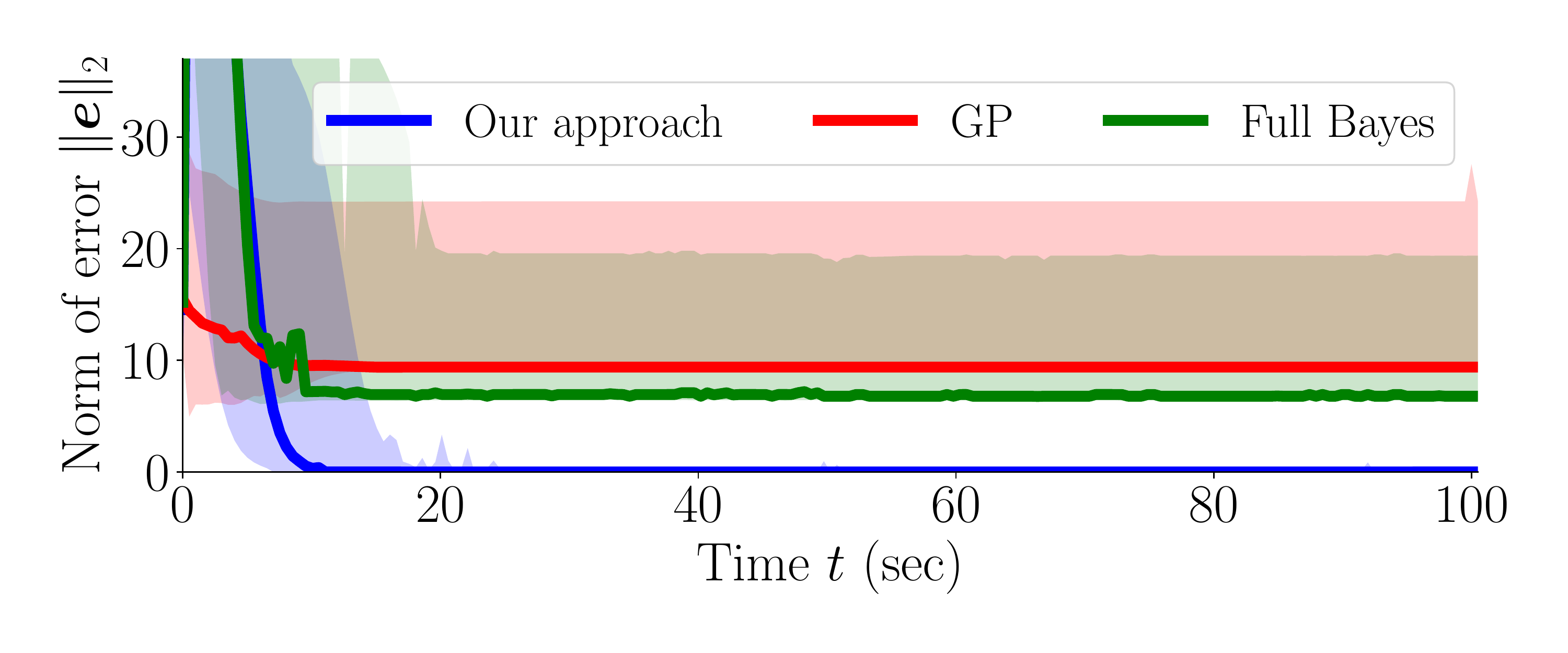}
	\caption{Tracking error norm $\Vert \bm{e}\Vert_2 $ over time $t$. Solid lines and shaded regions correspond to the median and lower/upper deciles, respectively. Blue corresponds to our method, red to vanilla GPs, green to fully Bayesian GPs. Our method yields the desired tracking error, as expected from \Cref{theorem:backsteppingcontrol}, whereas the vanilla and fully Bayesian GPs do not.} 
	\label{fig:trackingerrornorm}
\end{figure}
\section{Conclusion}
\label{section:conclusion}
\noindent We have presented robust uniform error bounds for Gaussian processes with unknown hyperparameters. Our approach is applicable for stationary radially non-decreasing kernels, which are commonly employed in practice. It hinges on computing a confidence region for the hyperparameters, which does not require pre-specified bounds for the hyperparameters. The presented theoretical results make them flexible and easily applicable to safety-critical scenarios, where theoretical guarantees are often required. In numerical regression benchmarks, the proposed error bound was shown to outperform  the error bound obtained with standard and fully Bayesian Gaussian processes. Furthermore, the presented tool resulted in better performance in a control problem, indicating better suitability for safety-critical settings.

\section*{Acknowledgements}

This work was supported in part by  the European Research Council Consolidator Grant Safe data-driven
control for human-centric systems (CO-MAN) under grant agreement number 864686.

We thank Christian Fiedler for the useful comments and constructive feedback on the manuscript.
%

\bibliography{ms}
\bibliographystyle{icml2022}

\newpage
\appendix
\onecolumn

	\section{Proofs}
	We begin by listing some well-known results required to prove the main statements from the paper. 
	\begin{lemma}[{\citealp[Theorem 6.11]{Wendland2004}}]
		\label{theorem:wendland}
		Suppose that $k \in L_1(\mathbb{R}^d)$ is a continuous function. Then $k(\cdot)$ is positive definite if and only if its Fourier transform is nonnegative and nonvanishing.
	\end{lemma}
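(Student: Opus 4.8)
The plan is to establish the equivalence through Bochner's theorem together with the Fourier inversion formula, using Bochner's theorem as a black box. I use the normalization $\hat{k}(\bm{\omega}) = (2\pi)^{-d/2}\int_{\mathbb{R}^d} k(\bm{x}) e^{-i\bm{x}\transp\bm{\omega}}\,d\bm{x}$, and I read ``nonnegative and nonvanishing'' as $\hat{k}\ge 0$ pointwise together with $\hat{k}\not\equiv 0$. The strategy for the ``if'' direction is to rewrite every kernel quadratic form as an integral of $\hat{k}$ against a nonnegative trigonometric weight; for the ``only if'' direction it is to identify the Bochner measure of $k$ with $\hat{k}$.

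For the ``if'' direction, the first step is to upgrade the integrability of $\hat{k}$: since $k\in L_1$ is continuous, a Gauss--Weierstrass summability argument gives $k(\bm{0}) = \lim_{\varepsilon\downarrow 0}(2\pi)^{-d/2}\int \hat{k}(\bm{\omega})\,e^{-\varepsilon\lVert\bm{\omega}\rVert_2^2/2}\,d\bm{\omega}$, and since $\hat{k}\ge 0$ monotone convergence yields $\int\hat{k} = (2\pi)^{d/2}k(\bm{0})<\infty$, i.e.\ $\hat{k}\in L_1$. With $k$ and $\hat{k}$ both in $L_1$ and $k$ continuous, the pointwise inversion formula $k(\bm{x}) = (2\pi)^{-d/2}\int \hat{k}(\bm{\omega})\,e^{i\bm{x}\transp\bm{\omega}}\,d\bm{\omega}$ holds. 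Substituting it into the quadratic form for pairwise distinct $\bm{x}_1,\dots,\bm{x}_N$ and $\bm{\alpha}\in\mathbb{C}^N\setminus\{\bm{0}\}$ gives $\sum_{j,l}\alpha_j\overline{\alpha_l}\,k(\bm{x}_j-\bm{x}_l) = (2\pi)^{-d/2}\int \hat{k}(\bm{\omega})\,\lvert\sum_j \alpha_j e^{i\bm{x}_j\transp\bm{\omega}}\rvert^2\,d\bm{\omega}\ge 0$, so $k$ is at least positive semi-definite. For strictness, note $P(\bm{\omega}):=\sum_j\alpha_j e^{i\bm{x}_j\transp\bm{\omega}}$ is the restriction of a nonzero entire function (characters with distinct frequencies are linearly independent), hence its zero set is Lebesgue-null, whereas $\{\hat{k}>0\}$ is a nonempty open set and therefore of positive measure; the integrand is thus strictly positive on a set of positive measure, forcing the quadratic form to be $>0$.

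For the ``only if'' direction, positive definiteness implies positive semi-definiteness, so Bochner's theorem represents $k$ as $k(\bm{x}) = (2\pi)^{-d/2}\int e^{i\bm{x}\transp\bm{\omega}}\,d\mu(\bm{\omega})$ for some finite nonnegative Borel measure $\mu$. Testing against Schwartz functions and applying Fubini identifies $\mu$, as a tempered distribution, with $(2\pi)^{-d/2}\hat{k}$; since $k\in L_1$, this $\hat{k}$ is the genuine continuous bounded Fourier transform, and $\mu\ge 0$ forces $\hat{k}\ge 0$ everywhere by continuity. Finally, $\hat{k}\equiv 0$ would give $k\equiv 0$ by injectivity of the Fourier transform on $L_1$, contradicting $k(\bm{0})>0$ (the $N=1$ case of positive definiteness); hence $\hat{k}$ is nonvanishing.

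I expect the only delicate point to be the strictness part of the ``if'' direction: the preliminary step $\hat{k}\in L_1$ is what makes the pointwise inversion formula legitimate, and then some measure-theoretic bookkeeping is needed — that $\lvert P\rvert^2>0$ almost everywhere (via real-analyticity plus linear independence of characters) and that $\{\hat{k}>0\}$ has positive Lebesgue measure. The semi-definite inequality and the entire ``only if'' direction are essentially immediate once Bochner's theorem is invoked.
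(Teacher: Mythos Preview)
The paper does not supply its own proof of this lemma; it is quoted from Wendland (2004, Theorem~6.11) and invoked as a black box. Your argument is correct and follows essentially the same route as Wendland's original proof: first establishing $\hat{k}\in L_1$ via Gauss--Weierstrass summability so that the pointwise Fourier inversion formula is available, then rewriting the kernel quadratic form as $(2\pi)^{-d/2}\int \hat{k}(\bm{\omega})\,\lvert P(\bm{\omega})\rvert^2\,d\bm{\omega}$ and using that the trigonometric sum $P$ is a nonzero real-analytic function (so its zero set is Lebesgue-null) together with the continuity of $\hat{k}$ to obtain the strict inequality; and, for the converse, invoking Bochner's theorem and identifying the spectral measure with $\hat{k}$.
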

	\begin{lemma}[{\citealp[p.70]{rao1973linear}}]
		\label{lemma:invminusinvisps}
		Let $\bm{A},\bm{B} \in \mathbb{R}^{d\times d}$ be two symmetric positive-definite matrices, such that $\bm{A} -\bm{B}$ is symmetric positive-definite. Then $\bm{B}^{-1} - \bm{A}^{-1}$ is symmetric positive-definite.
	\end{lemma}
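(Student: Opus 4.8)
The plan is to derive $\bm{B}^{-1}-\bm{A}^{-1}\succ 0$ by reducing the claim to a single symmetric matrix whose spectrum is pinned strictly between $0$ and $\bm{I}$. Since $\bm{A}\succ 0$, it admits a symmetric positive-definite square root $\bm{A}^{1/2}$, which is invertible; I would introduce $\bm{M}\coloneqq \bm{A}^{-1/2}\bm{B}\bm{A}^{-1/2}$. Because congruence by the invertible symmetric matrix $\bm{A}^{-1/2}$ preserves strict positive-definiteness, $\bm{B}\succ 0$ gives $\bm{M}\succ 0$, while $\bm{A}-\bm{B}\succ 0$ gives $\bm{I}-\bm{M}=\bm{A}^{-1/2}(\bm{A}-\bm{B})\bm{A}^{-1/2}\succ 0$. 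Hence $0\prec\bm{M}\prec\bm{I}$.

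Next I would pass to eigenvalues: $\bm{M}$ is symmetric with every eigenvalue lying in $(0,1)$, so $\bm{M}$ is invertible and $\bm{M}^{-1}$ has every eigenvalue in $(1,\infty)$, i.e.\ $\bm{M}^{-1}-\bm{I}\succ 0$. Since $\bm{M}^{-1}=\bm{A}^{1/2}\bm{B}^{-1}\bm{A}^{1/2}$, conjugating once more by $\bm{A}^{-1/2}$ gives
\[
\bm{A}^{-1/2}\!\left(\bm{M}^{-1}-\bm{I}\right)\!\bm{A}^{-1/2}=\bm{B}^{-1}-\bm{A}^{-1}\succ 0 .
\]
Symmetry of $\bm{B}^{-1}-\bm{A}^{-1}$ is immediate from symmetry of $\bm{A}^{-1}$ and $\bm{B}^{-1}$, which completes the argument.

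I do not expect a genuine obstacle, since this is a classical monotonicity property of matrix inversion; the only points needing care are (i) invoking that congruence by an invertible matrix preserves strict positive-definiteness, and (ii) tracking symmetry throughout. An alternative would be to start from the identity $\bm{B}^{-1}-\bm{A}^{-1}=\bm{B}^{-1}(\bm{A}-\bm{B})\bm{A}^{-1}$, but converting that product into a manifestly positive-definite expression still requires a symmetrizing congruence (e.g.\ writing it through $(\bm{A}-\bm{B})^{1/2}$), so the $\bm{A}^{-1/2}$-conjugation route above is the cleanest to write out in full.
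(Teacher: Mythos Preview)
Your argument is correct: the congruence $\bm{M}=\bm{A}^{-1/2}\bm{B}\bm{A}^{-1/2}$ reduces the question to the scalar-like statement $0\prec\bm{M}\prec\bm{I}\Rightarrow\bm{M}^{-1}\succ\bm{I}$, and conjugating back yields $\bm{B}^{-1}-\bm{A}^{-1}\succ 0$. There is no gap.

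As for comparison, the paper does not actually supply its own proof of this lemma; it is quoted as a standard result from \citet[p.~70]{rao1973linear} and used directly. Your write-up therefore goes beyond what the paper provides. Your $\bm{A}^{-1/2}$-conjugation route is the classical textbook proof; the alternative you mention via $\bm{B}^{-1}-\bm{A}^{-1}=\bm{B}^{-1}(\bm{A}-\bm{B})\bm{A}^{-1}$ can also be made to work by noting that this matrix is symmetric (being the difference of two symmetric matrices) and that for all nonzero $\bm{v}$ one has $\bm{v}^\top\bm{B}^{-1}(\bm{A}-\bm{B})\bm{A}^{-1}\bm{v}=\bm{w}^\top(\bm{A}-\bm{B})\bm{A}^{-1}\bm{B}\bm{w}>0$ with $\bm{w}=\bm{B}^{-1}\bm{v}$ after a further symmetrization, but your chosen path is indeed the cleanest.
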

The following result is a direct consequence of the block matrix inversion formula, which can be found, e.g., in \citealp[p.18]{horn2012matrix}.
	\begin{lemma}[{\citealp[p.18]{horn2012matrix}}]
		\label{lemma:blockmatrixinverse}
		Let 
		\[\bm{K} = \begin{pmatrix}
			\tilde{\bm{K}} & \bm{k} \\ 
			\bm{k}^{\top} & k
		\end{pmatrix}\]
	be an $N\times N$ non-singular partitioned matrix. Then the lower-right entry of its inverse is given by
	\[ [\bm{K}^{-1}]_{N,N} =  \left(k -  \bm{k}^{\top} \tilde{\bm{K}}^{-1} \bm{k}\right)^{-1}. \]
	\end{lemma}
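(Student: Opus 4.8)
The plan is to derive the claim as an immediate corollary of the Schur-complement form of the inverse of a $2\times 2$ block matrix. I would partition $\bm{K}$ exactly as written, with leading block $\tilde{\bm{K}}\in\mathbb{R}^{(N-1)\times(N-1)}$, off-diagonal blocks $\bm{k}$ and $\bm{k}^{\top}$, and scalar trailing block $k$. For the Schur complement to be well defined one needs $\tilde{\bm{K}}$ invertible; in the paper's application $\tilde{\bm{K}}$ is a regularized kernel matrix $\bm{K}_{\bm{\vartheta}}+\sigma_n^2\bm{I}$ (or a principal submatrix thereof), hence positive definite, so this is automatic. For the general statement one can instead note that the matrices with $\det\tilde{\bm{K}}\neq 0$ form a dense open subset of the non-singular matrices, prove the identity there, and extend by continuity of $\bm{K}\mapsto\bm{K}^{-1}$.

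Assuming $\tilde{\bm{K}}$ invertible, I would set $s\coloneqq k-\bm{k}^{\top}\tilde{\bm{K}}^{-1}\bm{k}$; since $\det\bm{K}=\det\tilde{\bm{K}}\cdot s$ and $\bm{K}$ is non-singular, $s\neq 0$. Next I would exhibit the candidate inverse
\[
\bm{K}^{-1}=\begin{pmatrix}
\tilde{\bm{K}}^{-1}+s^{-1}\,\tilde{\bm{K}}^{-1}\bm{k}\bm{k}^{\top}\tilde{\bm{K}}^{-1} & -s^{-1}\,\tilde{\bm{K}}^{-1}\bm{k} \\[4pt]
-s^{-1}\,\bm{k}^{\top}\tilde{\bm{K}}^{-1} & s^{-1}
\end{pmatrix}
\]
and verify $\bm{K}\,\bm{K}^{-1}=\bm{I}$ by block multiplication; the four resulting block identities each collapse after cancelling $\tilde{\bm{K}}\tilde{\bm{K}}^{-1}$ and using the definition of $s$. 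Reading off the lower-right entry then gives $[\bm{K}^{-1}]_{N,N}=s^{-1}=\bigl(k-\bm{k}^{\top}\tilde{\bm{K}}^{-1}\bm{k}\bigr)^{-1}$, which is the assertion.

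A shorter alternative, which avoids writing down the full inverse, goes through the adjugate: $[\bm{K}^{-1}]_{N,N}$ equals the $(N,N)$ cofactor of $\bm{K}$ divided by $\det\bm{K}$, and that cofactor is exactly $\det\tilde{\bm{K}}$; combining this with the determinant factorization $\det\bm{K}=\det\tilde{\bm{K}}\,\bigl(k-\bm{k}^{\top}\tilde{\bm{K}}^{-1}\bm{k}\bigr)$ cancels $\det\tilde{\bm{K}}$ and yields the result at once. I do not expect any real obstacle here; the only point that merits a sentence of justification is the invertibility of $\tilde{\bm{K}}$, which ensures every displayed expression is meaningful and which holds automatically in the Gaussian process setting in which this lemma is invoked.
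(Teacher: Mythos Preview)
Your proposal is correct and is precisely the standard Schur-complement argument the paper is pointing to: the paper does not give its own proof of this lemma but simply cites it as ``a direct consequence of the block matrix inversion formula'' from \citet[p.18]{horn2012matrix}. Your derivation via the explicit block inverse (or the cofactor/determinant shortcut) is exactly that block matrix inversion formula spelled out, so there is nothing to compare.
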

	By putting \Cref{lemma:invminusinvisps} and \Cref{lemma:blockmatrixinverse} together, we obtain the following statement.
	\begin{lemma}
		\label{lemma:covarianceinequality}
		Let 
		\[\bm{K}_1 = \begin{pmatrix}
			\tilde{\bm{K}}_1 & \bm{k}_1 \\ 
			\bm{k}^{\top}_1 & k_1
		\end{pmatrix}, \quad \bm{K}_2 = \begin{pmatrix}
		\tilde{\bm{K}}_2 & \bm{k}_2 \\ 
		\bm{k}^{\top}_2 & k_2
	\end{pmatrix}\]
		be $N\times N$ symmetric positive definite partitioned matrices, such that $\bm{K}_1 - \bm{K}_2$ is also positive definite. Then
		\[  k_1 -  \bm{k}^{\top}_1 \tilde{\bm{K}}^{-1}_1 \bm{k}_1 \geq k_2 -  \bm{k}^{\top}_2 \tilde{\bm{K}}^{-1}_2 \bm{k}_2. \]
	\end{lemma}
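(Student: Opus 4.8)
The plan is to express both Schur complements as reciprocals of the bottom-right entries of $\bm{K}_1^{-1}$ and $\bm{K}_2^{-1}$, and then to compare those two entries using \Cref{lemma:invminusinvisps}. Since $\bm{K}_1$ and $\bm{K}_2$ are symmetric positive definite they are in particular non-singular, so \Cref{lemma:blockmatrixinverse} applies to each and gives
\[ \left( k_i - \bm{k}_i^{\top}\tilde{\bm{K}}_i^{-1}\bm{k}_i \right)^{-1} = [\bm{K}_i^{-1}]_{N,N}, \qquad i = 1,2. \]
I would first note that each bottom-right entry is strictly positive: $[\bm{K}_i^{-1}]_{N,N} = \bm{e}_N^{\top}\bm{K}_i^{-1}\bm{e}_N > 0$, where $\bm{e}_N$ denotes the $N$-th standard basis vector, because $\bm{K}_i^{-1}$ is symmetric positive definite; by the displayed identity this also shows the Schur complements themselves are positive, which is what makes passing to reciprocals legitimate later.

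Next I would invoke \Cref{lemma:invminusinvisps} with $\bm{A} = \bm{K}_1$ and $\bm{B} = \bm{K}_2$: its hypotheses — $\bm{K}_1$, $\bm{K}_2$, and $\bm{K}_1 - \bm{K}_2$ all symmetric positive definite — are precisely the ones assumed here, so it yields that $\bm{K}_2^{-1} - \bm{K}_1^{-1}$ is symmetric positive definite. Reading off its $(N,N)$ diagonal entry, which is positive since the diagonal entries of a symmetric positive definite matrix are positive, gives
\[ [\bm{K}_2^{-1}]_{N,N} \;>\; [\bm{K}_1^{-1}]_{N,N} \;>\; 0. \]
Taking reciprocals reverses the inequality, and combining with the identity from the first step yields
\[ k_2 - \bm{k}_2^{\top}\tilde{\bm{K}}_2^{-1}\bm{k}_2 \;<\; k_1 - \bm{k}_1^{\top}\tilde{\bm{K}}_1^{-1}\bm{k}_1, \]
which is (a fortiori) the stated inequality.

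The proof is essentially a chaining of the two previously cited lemmas, so I do not expect a substantive obstacle. The only points requiring care are checking that \Cref{lemma:invminusinvisps} applies verbatim — it does, since the partitioned structure plays no role there — and justifying the two sign facts (positivity of the Schur complements and positivity of the diagonal entries of a positive definite matrix) so that the final passage to reciprocals is both valid and order-reversing.
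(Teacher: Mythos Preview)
Your proof is correct and follows essentially the same approach as the paper: both apply \Cref{lemma:invminusinvisps} to obtain $\bm{K}_2^{-1}-\bm{K}_1^{-1}$ positive definite, extract the $(N,N)$ entry via $\bm{e}_N^{\top}(\cdot)\bm{e}_N$, identify that entry with the reciprocal of the Schur complement using \Cref{lemma:blockmatrixinverse}, and then invert. Your additional care in justifying positivity before taking reciprocals is a nice touch that the paper leaves implicit.
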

\begin{proof}
	From \Cref{lemma:invminusinvisps}, we obtain that $\bm{q}^{\top}\bm{K}_2^{-1}\bm{q} \geq \bm{q}^{\top}\bm{K}_1^{-1}\bm{q}$ holds for any $\bm{q}\in\mathbb{R}^N$. In particular, for $\bm{q} = (0, \cdots 0, 1)^{\top}$, this implies, together with \Cref{lemma:blockmatrixinverse},
	\begin{align*}
		&(k_2 -  \bm{k}^{\top}_2 \tilde{\bm{K}}^{-1}_2 \bm{k}_2)^{-1} = \bm{q}^{\top}\bm{K}_2^{-1} \bm{q}  \geq \bm{q}^{\top}\bm{K}_1^{-1} \bm{q} = (k_1 -  \bm{k}^{\top}_1 \tilde{\bm{K}}^{-1}_1 \bm{k}_1)^{-1}, 
	\end{align*}
i.e., $ k_1 -  \bm{k}^{\top}_1 \tilde{\bm{K}}^{-1}_1 \bm{k}_1 \geq k_2 -  \bm{k}^{\top}_2 \tilde{\bm{K}}^{-1}_2 \bm{k}_2$ holds.
\end{proof}
	Using \Cref{theorem:wendland}, we then obtain the following.
	
	\begin{lemma}
		\label{lemma:kminusprodkispsd}
		Let $\bm{\vartheta}'',\bm{\vartheta} , \bm{\vartheta}'$ be lengthscales with $\bm{\vartheta}''>\bm{\vartheta} > \bm{\vartheta}'$, and let $\gamma = \left(\prod_{i=1}^{d} \frac{{\vartheta}''_i}{\vartheta'_i}\right)^{\frac{1}{2}}$. Furthermore, for an arbitrary measurement data set $\mathcal{D}$, let $\bm{K}_{\bm{\vartheta}}$ and $\bm{K}_{\bm{\vartheta}'}$ denote the corresponding covariance matrices computed using the kernels $k_{\bm{\vartheta}}(\cdot,\cdot)$ and $k_{\bm{\vartheta}'}(\cdot,\cdot)$, respectively. Then the matrix
		\[ {\gamma}^2\bm{K}_{\bm{\vartheta}'}  - \bm{K}_{\bm{\vartheta}}   \]
		is positive semi-definite.
	\end{lemma}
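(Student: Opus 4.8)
The plan is to move to the frequency domain and use a Bochner-type argument, the crucial input being \Cref{assumption:kernelassumption}: the generating kernel $k$ has radial, non-increasing, non-negative Fourier transform $\hat k(\bm\omega)=\kappa(\lVert\bm\omega\rVert_2)$, and $\hat k$ is an integrable density since $\int\hat k = k(\bm 0)=1$. Writing the stationary kernel as a function of the difference $\bm z=\bm x-\bm x'$, i.e. $k_{\bm\vartheta}(\bm z)\coloneqq k(z_1/\vartheta_1,\ldots,z_d/\vartheta_d)$, the first step is the standard scaling law for the Fourier transform under a diagonal change of variables,
\[\widehat{k_{\bm\vartheta}}(\bm\omega)=\Big(\textstyle\prod_{i=1}^d\vartheta_i\Big)\,\hat k(\vartheta_1\omega_1,\ldots,\vartheta_d\omega_d)=\Big(\textstyle\prod_{i=1}^d\vartheta_i\Big)\,\kappa\big(\lVert(\vartheta_1\omega_1,\ldots,\vartheta_d\omega_d)\rVert_2\big),\]
which is again non-negative and integrable.

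The second step is to compute the Fourier transform of the difference function $g(\bm z)\coloneqq\gamma^2 k_{\bm\vartheta'}(\bm z)-k_{\bm\vartheta}(\bm z)$, namely
\[\hat g(\bm\omega)=\gamma^2\Big(\textstyle\prod_i\vartheta_i'\Big)\kappa\big(\lVert(\vartheta_1'\omega_1,\ldots,\vartheta_d'\omega_d)\rVert_2\big)-\Big(\textstyle\prod_i\vartheta_i\Big)\kappa\big(\lVert(\vartheta_1\omega_1,\ldots,\vartheta_d\omega_d)\rVert_2\big),\]
and to show it is pointwise non-negative. Two facts suffice: (i) by the choice $\gamma^2=\prod_i\vartheta_i''/\vartheta_i'$ we have $\gamma^2\prod_i\vartheta_i'=\prod_i\vartheta_i''\ge\prod_i\vartheta_i$, because $\bm\vartheta''>\bm\vartheta$ componentwise with positive entries; and (ii) since $\bm\vartheta'<\bm\vartheta$ componentwise, $\lVert(\vartheta_1'\omega_1,\ldots)\rVert_2\le\lVert(\vartheta_1\omega_1,\ldots)\rVert_2$, so monotonicity of $\kappa$ gives $\kappa(\lVert(\vartheta_1'\omega_1,\ldots)\rVert_2)\ge\kappa(\lVert(\vartheta_1\omega_1,\ldots)\rVert_2)\ge0$. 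Combining (i) and (ii), $\gamma^2(\prod_i\vartheta_i')\kappa(\cdot')\ge(\prod_i\vartheta_i)\kappa(\cdot')\ge(\prod_i\vartheta_i)\kappa(\cdot)$, hence $\hat g(\bm\omega)\ge0$ for all $\bm\omega$.

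The third step is to conclude. Since $\hat g\ge0$ and $g\in L_1$, Fourier inversion yields $g(\bm z)=\int\hat g(\bm\omega)\,e^{\mathrm{i}\bm\omega\transp\bm z}\,d\bm\omega$ up to a normalizing constant, so for arbitrary inputs $\bm x_1,\ldots,\bm x_N$ and coefficients $c_1,\ldots,c_N$,
\[\sum_{j,\ell}c_jc_\ell\,g(\bm x_j-\bm x_\ell)=\int\hat g(\bm\omega)\,\Big|\textstyle\sum_j c_j e^{\mathrm{i}\bm\omega\transp\bm x_j}\Big|^2 d\bm\omega\ \ge\ 0,\]
i.e. $g$ is a positive semi-definite function. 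Applying this to the inputs of the data set $\mathcal D$ shows that the Gram matrix $[\,g(\bm x_j-\bm x_\ell)\,]_{j,\ell}=\gamma^2\bm K_{\bm\vartheta'}-\bm K_{\bm\vartheta}$ is positive semi-definite, which is the claim. (Equivalently, one may phrase the last two steps as: $g$ is a non-negative linear combination of the shifted/scaled positive-definite kernels guaranteed by \Cref{theorem:wendland}.)

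The only delicate point — the main obstacle, such as it is — is the Fourier-analytic bookkeeping: the scaling identity for $\widehat{k_{\bm\vartheta}}$ and the legitimacy of the inversion step, both of which rest on $\hat k$ being a non-negative integrable density, exactly what \Cref{assumption:kernelassumption} provides. The core of the argument, the sign analysis of $\hat g$ in step two, is an elementary monotonicity comparison.
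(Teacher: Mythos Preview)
Your proof is correct and follows essentially the same route as the paper: compute the Fourier transform of the scaled kernel via the diagonal change of variables, use the monotonicity of $\kappa$ together with $\gamma^2\prod_i\vartheta_i'=\prod_i\vartheta_i''\ge\prod_i\vartheta_i$ to show the Fourier transform of $\gamma^2 k_{\bm\vartheta'}-k_{\bm\vartheta}$ is non-negative, and then invoke Bochner's theorem (the paper cites the Wendland version, \Cref{theorem:wendland}, while you spell out the inversion integral directly, which is equivalent).
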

	\begin{proof}
		Recall that, by \Cref{assumption:kernelassumption}, the kernels $k_{\bm{\vartheta}}(\cdot,\cdot)$ and $k_{\bm{\vartheta}'}(\cdot,\cdot)$ are of the form
		\[k_{\bm{\vartheta}}(\bm{x},\bm{x}') = k\left(\left(\frac{x_1-x_1'}{\vartheta_{1}}, \ldots, \frac{x_d-x_d'}{\vartheta_{d}}\right)^{\top}\right), \qquad \qquad k_{\bm{\vartheta}'}(\bm{x},\bm{x}') = k\left(\left(\frac{x_1-x_1'}{\vartheta_{1}'}, \ldots, \frac{x_d-x_d'}{\vartheta_{d}'}\right)^{\top}\right),\]
		respectively, where $k(\cdot)$ has Fourier transform $\hat{k}(\bm{\omega})  \coloneqq \kappa(\lVert \bm{\omega}\rVert_2)$. Define the matrices $\bm{T} \coloneqq \text{diag}(\frac{1}{\vartheta_1}, \ldots,\frac{1}{\vartheta_d})$ and $\bm{T}' \coloneqq \text{diag}(\frac{1}{\vartheta_1'}, \ldots,\frac{1}{\vartheta_d'})$, and the functions $k_{\bm{T}}:\mathbb{R}^d\rightarrow\mathbb{R}$ and $k_{\bm{T}'}:\mathbb{R}^d\rightarrow\mathbb{R}$, such that $k_{\bm{T}}(\bm{x})\coloneqq k(\bm{T}\bm{x}) $ and $k_{\bm{T}'}(\bm{x})\coloneqq k(\bm{T}'\bm{x}) $ holds. Let $\hat{k}_{\bm{T}}(\cdot)$ and $\hat{k}_{\bm{T}'}(\cdot)$ denote the Fourier transform of of $k_{\bm{T}}(\cdot)$ and $k_{\bm{T}'}(\cdot)$, respectively. We then have
		\begin{align} \begin{split}
		\label{eq:fouriertransform}
		\hat{k}_{\bm{T}}(\bm{\omega}) &=  \int\limits_{\mathbb{R}^d} k(\bm{T}\bm{x})  e^{-i 2\pi \bm{x}^\top \bm{\omega}}d\bm{x} \\ & 
		{=}  \int\limits_{\mathbb{R}^d} k(\bm{z}) e^{-i 2\pi \bm{z}^\top\bm{T}^{-\top} \bm{\omega}} {\det(\bm{T}^{-1})}d{\bm{z}}
		\\ &= \left(\prod\limits_{i=1}^d \vartheta_i \right)\int\limits_{\mathbb{R}^d} k(\bm{z}) e^{-i 2\pi \bm{z}^\top\bm{T}^{-\top} \bm{\omega}} d{\bm{z}}
		\\ &  =
		  \left(\prod\limits_{i=1}^d \vartheta_i\right) \hat{k}(\bm{T}^{-1} \bm{\omega})  
		{=} 
		  \left(\prod\limits_{i=1}^d \vartheta_i\right) \kappa(\lVert \bm{T}^{-1} \bm{\omega} \rVert_2). 
		\end{split}\end{align}
		Since $\kappa(\cdot)$ is non-increasing and $\bm{\vartheta} > \bm{\vartheta}'$, we have that
		\[\kappa(\lVert \left(\bm{T}'\right)^{-1} \bm{\omega} \rVert_2) \geq \kappa(\lVert \bm{T}^{-1} \bm{\omega} \rVert_2).\]
		Hence, \eqref{eq:fouriertransform} together with $\gamma^2 > \prod\limits_{i=1}^d\frac{{\vartheta}_i}{{\vartheta}_i'}$ implies \[\gamma^2\hat{k}_{\bm{T} '}(\bm{\omega}) -  \hat{k}_{\bm{T} }(\bm{\omega})\geq \prod\limits_{i=1}^d\frac{{\vartheta}_i}{{\vartheta}_i'}\hat{k}_{\bm{T} '}(\bm{\omega}) -  \hat{k}_{\bm{T} }(\bm{\omega}) \geq 0\] for all $\bm{\omega} \in \mathbb{R}^d$. It then follows from \Cref{theorem:wendland} that $\gamma^2{k}_{\bm{T} '}(\cdot) -  {k}_{\bm{T} }(\cdot)$ is a positive-definite function. Since 
		 \[\gamma^2{k}_{\bm{T} '}(\bm{x}-\bm{x}') -  {k}_{\bm{T} }(\bm{x}-\bm{x}') = \gamma^2k_{\bm{\vartheta}'}(\bm{x},\bm{x}') - k_{\bm{\vartheta}}(\bm{x},\bm{x}')\]
		 this implies the desired result.
	\end{proof}
	\begin{proof}[Proof of \Cref{lemma:sigmasdecreasewithlengthscale}]
		The result follows directly from \Cref{lemma:covarianceinequality} and \Cref{lemma:kminusprodkispsd}.
	\end{proof}

	In order to prove \Cref{theorem:errorwithknownbounds}, we aim to bound the difference between posterior means $\vert \mu_{\bm{\vartheta}}(\bm{x}) - \mu_{\bm{\vartheta}'}(\bm{x}) \vert$. To this end, we employ the two following results.
\begin{lemma}
	\label{lemma:srinivas}
	Let $\rVert \cdot \lVert_{k_{\bm{\vartheta}}}$ denote the RKHS norm with respect to a kernel $k_{\bm{\vartheta}}$, and consider a function $\mu(\cdot) $ with bounded RKHS norm $\rVert \mu \lVert_{k_{\bm{\vartheta}}} < \infty$. Then, for all $\bm{x} \in \X$,
	\begin{align*}
		\begin{split}
			\vert \mu(\bm{x}) \vert^2 \leq   \sigma^2_{\bm{\vartheta}} (\bm{x}) \Bigg(\rVert\mu \rVert^2_{k_{\bm{\vartheta}}} +  \sum \limits_{i=1}^N \left(\frac{\mu(\bm{x}_i)}{\sigma_n}\right)^2\Bigg),
		\end{split}
	\end{align*}
	holds, where $\sigma_n$ is the noise variance and $\bm{x}_1,\ldots,\bm{x}_N$ are the measurement inputs.
\end{lemma}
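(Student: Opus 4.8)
The plan is to combine the reproducing property of the RKHS $\mathcal{H} := \mathcal{H}_{k_{\bm{\vartheta}}}$ with a variational characterization of the posterior variance. Writing $\phi(\bm{x}) := k_{\bm{\vartheta}}(\cdot,\bm{x}) \in \mathcal{H}$, I would first establish that
\[
\sigma^2_{\bm{\vartheta}}(\bm{x}) = \min_{\bm{\alpha}\in\mathbb{R}^N}\Big\lVert \phi(\bm{x}) - \textstyle\sum_{i=1}^N \alpha_i\,\phi(\bm{x}_i)\Big\rVert_{\mathcal{H}}^2 + \sigma_n^2\lVert\bm{\alpha}\rVert_2^2 ,
\]
with minimizer $\bm{\alpha}^{\star} = (\bm{K}_{\bm{\vartheta}}+\sigma_n^2\Id)^{-1}\bm{k}_{\bm{\vartheta}}(\bm{x})$. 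This follows by expanding the objective with $\langle\phi(\bm{x}),\phi(\bm{x}')\rangle_{\mathcal{H}} = k_{\bm{\vartheta}}(\bm{x},\bm{x}')$, which turns it into the quadratic $k_{\bm{\vartheta}}(\bm{x},\bm{x}) - 2\bm{\alpha}^{\top}\bm{k}_{\bm{\vartheta}}(\bm{x}) + \bm{\alpha}^{\top}(\bm{K}_{\bm{\vartheta}}+\sigma_n^2\Id)\bm{\alpha}$, setting the gradient to zero, and substituting $\bm{\alpha}^{\star}$ back, which reproduces exactly the definition of $\sigma^2_{\bm{\vartheta}}(\bm{x})$.

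Next I would use the reproducing property to decompose $\mu(\bm{x})$ for this particular $\bm{\alpha}^{\star}$:
\[
\mu(\bm{x}) = \big\langle \mu,\ \phi(\bm{x})-\textstyle\sum_i\alpha_i^{\star}\phi(\bm{x}_i)\big\rangle_{\mathcal{H}} + \textstyle\sum_i\alpha_i^{\star}\,\mu(\bm{x}_i).
\]
Applying the Cauchy--Schwarz inequality in $\mathcal{H}$ to the first term and, after inserting $\sigma_n$ factors (pairing $\sigma_n\alpha_i^{\star}$ with $\mu(\bm{x}_i)/\sigma_n$), the Cauchy--Schwarz inequality in $\mathbb{R}^N$ to the second term, I obtain
\[
\lvert\mu(\bm{x})\rvert \le \lVert\mu\rVert_{k_{\bm{\vartheta}}}\, a + \Big(\textstyle\sum_i(\mu(\bm{x}_i)/\sigma_n)^2\Big)^{1/2} b,
\]
where $a := \lVert\phi(\bm{x})-\sum_i\alpha_i^{\star}\phi(\bm{x}_i)\rVert_{\mathcal{H}}$ and $b := \sigma_n\lVert\bm{\alpha}^{\star}\rVert_2$, so that $a^2+b^2 = \sigma^2_{\bm{\vartheta}}(\bm{x})$ by the first step. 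A final application of Cauchy--Schwarz in $\mathbb{R}^2$ to the pairs $(a,b)$ and $(\lVert\mu\rVert_{k_{\bm{\vartheta}}},\,(\sum_i(\mu(\bm{x}_i)/\sigma_n)^2)^{1/2})$ gives $\lvert\mu(\bm{x})\rvert^2 \le (a^2+b^2)\big(\lVert\mu\rVert_{k_{\bm{\vartheta}}}^2 + \sum_i(\mu(\bm{x}_i)/\sigma_n)^2\big)$, which is the claim.

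The routine parts are the two matrix-algebra identities (the expansion of the variational objective and the form of its minimizer). The only place requiring care is the bookkeeping of the $\sigma_n$ weights in the second Cauchy--Schwarz step, so that the noise variance enters in precisely the form $(\mu(\bm{x}_i)/\sigma_n)^2$ and the residual $\sigma_n^2\lVert\bm{\alpha}^{\star}\rVert_2^2$ combines with the RKHS residual to yield exactly $\sigma^2_{\bm{\vartheta}}(\bm{x})$; this is why choosing $\bm{\alpha} = \bm{\alpha}^{\star}$ rather than an arbitrary $\bm{\alpha}$ is essential. I do not expect any substantive obstacle beyond this.
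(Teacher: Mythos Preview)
Your argument is correct. The paper itself does not prove this lemma; it simply cites \citet[Appendix~B, eq.~(11)]{Srinivas2012}. The route you take---the variational/optimal-recovery identity $\sigma^2_{\bm{\vartheta}}(\bm{x}) = \min_{\bm{\alpha}}\bigl\lVert\phi(\bm{x})-\sum_i\alpha_i\phi(\bm{x}_i)\bigr\rVert_{\mathcal{H}}^2 + \sigma_n^2\lVert\bm{\alpha}\rVert_2^2$, followed by the reproducing-property decomposition of $\mu(\bm{x})$ and two Cauchy--Schwarz steps---is essentially the argument in that reference (Srinivas et~al.\ phrase it via an augmented kernel on $\X\times\{1,\ldots,N\}$, but the content is the same). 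Your bookkeeping of the $\sigma_n$ factors and the choice $\bm{\alpha}=\bm{\alpha}^\star$ are exactly right; nothing further is needed.
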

\begin{proof}
	See \citealp[Appendix B, eq. (11)]{Srinivas2012}.
\end{proof}

	\begin{lemma}[{\citealp{bull2011convergence}}]
		\label{lemma:bull}
		Let $\mu(\cdot)$ be a function with bounded reproducing kernel Hilbert space norm $\rVert \mu \lVert_{k_{\bm{\vartheta}}}$ with respect to the kernel ${k_{\bm{\vartheta}}}$. Then, for all $\bm{\vartheta}' \leq \bm{\vartheta}$,
	\begin{align*}
		\begin{split}
			\rVert \mu \Vert^2_{k_{\bm{\vartheta}'}}  \leq \prod \limits_{i=1}^d \frac{\vartheta_{i}}{\vartheta_i'}\rVert \mu \Vert^2_{k_{\bm{\vartheta}}} .
		\end{split}
	\end{align*}
	\end{lemma}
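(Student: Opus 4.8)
The statement is quoted from \citet{bull2011convergence}, so the shortest route is to cite it directly; I sketch instead a self-contained argument using the same Fourier-analytic tools already employed in the proof of \Cref{lemma:kminusprodkispsd}. Write $\bm{T} \coloneqq \mathrm{diag}(\vartheta_1^{-1},\dots,\vartheta_d^{-1})$ and $\bm{T}' \coloneqq \mathrm{diag}((\vartheta_1')^{-1},\dots,(\vartheta_d')^{-1})$, so that $k_{\bm{\vartheta}}(\bm{x},\bm{x}') = k(\bm{T}(\bm{x}-\bm{x}'))$ and, by \eqref{eq:fouriertransform}, the spectral density of $k_{\bm{\vartheta}}$ is $\hat{k}_{\bm{T}}(\bm{\omega}) = \big(\prod_{i=1}^d \vartheta_i\big)\,\kappa(\lVert \bm{T}^{-1}\bm{\omega}\rVert_2)$, which is strictly positive almost everywhere because positive definiteness forces the spectral density to be nonvanishing (\Cref{theorem:wendland}). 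For a stationary kernel the RKHS norm admits the spectral representation $\lVert \mu \rVert_{k_{\bm{\vartheta}}}^2 = \int_{\mathbb{R}^d} \lvert \hat{\mu}(\bm{\omega})\rvert^2 / \hat{k}_{\bm{T}}(\bm{\omega})\, d\bm{\omega}$ (with the analogous expression for $\bm{\vartheta}'$); when $\X$ is a strict subset of $\mathbb{R}^d$ one uses that the RKHS norm on $\X$ is the infimum of the $\mathbb{R}^d$-norms over extensions of $\mu$, so it suffices to prove the inequality on the full domain and then pass to the infimum on both sides.

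The key step is a pointwise comparison of the two densities. Since $\bm{T}^{-1} = \mathrm{diag}(\vartheta_1,\dots,\vartheta_d)$ and $(\bm{T}')^{-1} = \mathrm{diag}(\vartheta_1',\dots,\vartheta_d')$ with $\vartheta_i' \le \vartheta_i$ componentwise, we have $\lVert (\bm{T}')^{-1}\bm{\omega}\rVert_2 \le \lVert \bm{T}^{-1}\bm{\omega}\rVert_2$ for every $\bm{\omega}$, and since $\kappa$ is non-increasing, $\kappa(\lVert (\bm{T}')^{-1}\bm{\omega}\rVert_2) \ge \kappa(\lVert \bm{T}^{-1}\bm{\omega}\rVert_2) > 0$. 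Therefore
\[
\frac{1}{\hat{k}_{\bm{T}'}(\bm{\omega})} \;=\; \frac{1}{\big(\prod_i \vartheta_i'\big)\,\kappa(\lVert (\bm{T}')^{-1}\bm{\omega}\rVert_2)} \;\le\; \frac{1}{\big(\prod_i \vartheta_i'\big)\,\kappa(\lVert \bm{T}^{-1}\bm{\omega}\rVert_2)} \;=\; \Big(\prod_{i=1}^d \tfrac{\vartheta_i}{\vartheta_i'}\Big)\frac{1}{\hat{k}_{\bm{T}}(\bm{\omega})}.
\]
Multiplying by $\lvert \hat{\mu}(\bm{\omega})\rvert^2 \ge 0$ and integrating over $\mathbb{R}^d$ gives $\lVert \mu \rVert_{k_{\bm{\vartheta}'}}^2 \le \big(\prod_{i=1}^d \vartheta_i/\vartheta_i'\big)\lVert \mu \rVert_{k_{\bm{\vartheta}}}^2$, which is the claim; finiteness of the right-hand side also certifies that $\mu$ lies in the RKHS of $k_{\bm{\vartheta}'}$.

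I expect the main obstacle to be the bookkeeping around the spectral representation rather than the inequality itself: one must fix the Fourier normalization so that it is consistent with \eqref{eq:fouriertransform}, verify that $\kappa$ is positive almost everywhere so the densities may be inverted, and justify the restriction/extension argument when $\X \subsetneq \mathbb{R}^d$. Once these points are settled, the result drops out of the monotonicity of $\kappa$ together with the determinant factor $\prod_i \vartheta_i$ that appears in \eqref{eq:fouriertransform}.
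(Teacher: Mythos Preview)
Your argument is correct; it is in fact essentially the proof that appears in \citet{bull2011convergence}. The paper itself does not prove the lemma at all: it is simply imported from Bull by citation, with no argument given. So there is nothing to compare on the paper's side beyond noting that you have reconstructed the original source's proof using the same Fourier machinery the paper already set up for \Cref{lemma:kminusprodkispsd}.

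One small remark on your use of \Cref{theorem:wendland}: as stated there, ``nonvanishing'' means the Fourier transform is not identically zero, not that it is strictly positive at every point. This does not break your argument---on the support of the spectral measure the ratio $\lvert\hat\mu\rvert^2/\hat k_{\bm T}$ is well defined and the pointwise comparison goes through, while off the support $\hat\mu$ must vanish for $\mu$ to lie in the RKHS---but you should phrase it that way rather than claiming strict positivity a.e.\ from Wendland alone. For the kernels actually covered by \Cref{assumption:kernelassumption} (Gaussian, Mat\'ern, etc.) the density is in any case strictly positive everywhere, so the issue is cosmetic. Your handling of the restriction to $\X\subsetneq\mathbb{R}^d$ via the infimum over extensions is the standard move and is fine: pick the $k_{\bm\vartheta}$-optimal extension, apply the $\mathbb{R}^d$ inequality to it, and bound the $k_{\bm\vartheta'}$-norm on $\X$ from above by that extension's full-space norm.
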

	We are now able to bound the difference between the means of two Gaussian processes conditioned on the same data, as described in the next statement.
	\begin{lemma}
		\label{lemma:boundedmeandifference}
		Let \Cref{assumption:kernelassumption} hold, let $\bm{\vartheta}',\bm{\vartheta} '',\bm{\vartheta}, \bm{\vartheta}_0 \in \bm{\varTheta}$ be vectors of lengthscales with $\bm{\vartheta}'\leq \bm{\vartheta}_{0},\bm{\vartheta}\leq \bm{\vartheta}''$, and let $\mu_{\bm{\vartheta}_0}(\bm{x}), \mu_{\bm{\vartheta}}(\bm{x})$ denote GP posterior means conditioned on some measurement data set $\mathcal{D}=(\bm{X},\bm{y})$. Then
		\begin{align*}
			\vert \mu_{\bm{\vartheta}_0}(\bm{x}) - \mu_{\bm{\vartheta}}(\bm{x}) \vert^2 \leq \sigma^2_{\bm{\vartheta}'} (\bm{x}) \gamma^24\frac{ \rVert\bm{y}\rVert^2_2}{\sigma_n^2}
		\end{align*}
		holds for all $\bm{x} \in \X$, where
		$\sigma_n$ is the noise variance, and $\bm{y}$ is the measurement data.
	\end{lemma}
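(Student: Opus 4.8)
The strategy is to bound the posterior-mean difference by writing $\mu_{\bm{\vartheta}_0}(\bm{x}) - \mu_{\bm{\vartheta}}(\bm{x})$ as a single function, controlling its RKHS norm with respect to a suitably small kernel, and then invoking \Cref{lemma:srinivas} to convert the RKHS-norm bound into a bound in terms of $\sigma_{\bm{\vartheta}'}(\bm{x})$. The natural reference kernel is $k_{\bm{\vartheta}'}$, since \Cref{lemma:srinivas} gives $\vert\mu(\bm{x})\vert^2 \le \sigma^2_{\bm{\vartheta}'}(\bm{x})\big(\lVert\mu\rVert^2_{k_{\bm{\vartheta}'}} + \sum_i \mu(\bm{x}_i)^2/\sigma_n^2\big)$ for any $\mu$ with finite $k_{\bm{\vartheta}'}$-RKHS norm, and the target bound has exactly the factor $\sigma^2_{\bm{\vartheta}'}(\bm{x})$.

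First I would treat the two means separately: set $\mu(\bm{x}) = \mu_{\bm{\vartheta}_0}(\bm{x}) - \mu_{\bm{\vartheta}}(\bm{x})$ and note $\mu(\bm{x}_i) = \mu_{\bm{\vartheta}_0}(\bm{x}_i) - \mu_{\bm{\vartheta}}(\bm{x}_i)$. For each posterior mean individually, I would bound its RKHS norm with respect to its own kernel: the standard identity $\lVert\mu_{\bm{\vartheta}}\rVert^2_{k_{\bm{\vartheta}}} = \bm{y}^\top(\bm{K}_{\bm{\vartheta}}+\sigma_n^2\Id)^{-1}\bm{K}_{\bm{\vartheta}}(\bm{K}_{\bm{\vartheta}}+\sigma_n^2\Id)^{-1}\bm{y} \le \sigma_n^{-2}\lVert\bm{y}\rVert_2^2$, since $(\bm{K}_{\bm{\vartheta}}+\sigma_n^2\Id)^{-1}\bm{K}_{\bm{\vartheta}}(\bm{K}_{\bm{\vartheta}}+\sigma_n^2\Id)^{-1} \preceq \sigma_n^{-2}\Id$. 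Similarly $\vert\mu_{\bm{\vartheta}}(\bm{x}_i)\vert$ is bounded; in fact evaluating at training points, $\lVert(\mu_{\bm{\vartheta}}(\bm{x}_1),\dots,\mu_{\bm{\vartheta}}(\bm{x}_N))\rVert_2 = \lVert\bm{K}_{\bm{\vartheta}}(\bm{K}_{\bm{\vartheta}}+\sigma_n^2\Id)^{-1}\bm{y}\rVert_2 \le \lVert\bm{y}\rVert_2$. Then I would use \Cref{lemma:bull} to transfer each RKHS norm from $k_{\bm{\vartheta}_0}$ (resp. $k_{\bm{\vartheta}}$) down to $k_{\bm{\vartheta}'}$, picking up a factor $\prod_i \vartheta_{0,i}/\vartheta'_i \le \prod_i \vartheta''_i/\vartheta'_i = \gamma^2$ (and likewise for $\bm{\vartheta}$). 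Combining via the triangle inequality $\lVert\mu\rVert_{k_{\bm{\vartheta}'}} \le \lVert\mu_{\bm{\vartheta}_0}\rVert_{k_{\bm{\vartheta}'}} + \lVert\mu_{\bm{\vartheta}}\rVert_{k_{\bm{\vartheta}'}}$ gives $\lVert\mu\rVert_{k_{\bm{\vartheta}'}} \le 2\gamma\sigma_n^{-1}\lVert\bm{y}\rVert_2$, hence $\lVert\mu\rVert^2_{k_{\bm{\vartheta}'}} \le 4\gamma^2\sigma_n^{-2}\lVert\bm{y}\rVert_2^2$, and similarly $\sum_i \mu(\bm{x}_i)^2/\sigma_n^2 \le 4\lVert\bm{y}\rVert_2^2/\sigma_n^2 \le 4\gamma^2\lVert\bm{y}\rVert_2^2/\sigma_n^2$ (since $\gamma \ge 1$). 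Plugging into \Cref{lemma:srinivas} yields the claimed $\sigma^2_{\bm{\vartheta}'}(\bm{x})\cdot\gamma^2\cdot 4\lVert\bm{y}\rVert_2^2/\sigma_n^2$, possibly with a factor that needs to be absorbed — I would need to check whether the two contributions combine as a sum inside the parenthesis or whether a slightly more careful split (e.g.\ bounding $\lVert\mu\rVert^2_{k_{\bm{\vartheta}'}} + \sum_i\mu(\bm{x}_i)^2/\sigma_n^2$ jointly) is needed to land exactly on the stated constant.

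The main obstacle is the bookkeeping of constants: \Cref{lemma:srinivas} produces the sum of an RKHS-norm-squared term and a sum-of-squared-evaluations term, and naively the triangle inequality on each gives $4\gamma^2\lVert\bm{y}\rVert_2^2/\sigma_n^2 + 4\lVert\bm{y}\rVert_2^2/\sigma_n^2$, which is larger than the stated $4\gamma^2\lVert\bm{y}\rVert_2^2/\sigma_n^2$. So I would either need a sharper joint estimate, or the statement implicitly uses that \Cref{lemma:srinivas}'s bound can itself be written as $\sigma^2_{\bm{\vartheta}'}(\bm{x})(\lVert\mu\rVert_{k_{\bm{\vartheta}'}} + \text{something})^2$-style grouping; a cleaner route is to bound $\mu$ by summing the two \Cref{lemma:srinivas} bounds applied to $\mu_{\bm{\vartheta}_0}$ and $\mu_{\bm{\vartheta}}$ separately, using \Cref{lemma:sigmasdecreasewithlengthscale} to replace $\sigma_{\bm{\vartheta}_0}$ and $\sigma_{\bm{\vartheta}}$ by $\gamma\sigma_{\bm{\vartheta}'}$, and \Cref{lemma:bull} with the original kernels $k_{\bm{\vartheta}_0}, k_{\bm{\vartheta}}$ so that $\lVert\mu_{\bm{\vartheta}}\rVert^2_{k_{\bm{\vartheta}}} \le \lVert\bm{y}\rVert_2^2/\sigma_n^2$ directly; then $\vert\mu_{\bm{\vartheta}}(\bm{x})\vert \le \gamma\sigma_{\bm{\vartheta}'}(\bm{x})\cdot\sqrt{2}\lVert\bm{y}\rVert_2/\sigma_n$ for each, and the triangle inequality in $\bm{x}$ gives the factor-$2$ that squares to the desired bound. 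I expect the final write-up to follow this second, cleaner decomposition, and the only genuinely delicate point is making sure the $\sum_i(\mu(\bm{x}_i)/\sigma_n)^2$ term in \Cref{lemma:srinivas} is correctly bounded by $\lVert\bm{y}\rVert_2^2/\sigma_n^2$ via the spectral bound on $\bm{K}_{\bm{\vartheta}}(\bm{K}_{\bm{\vartheta}}+\sigma_n^2\Id)^{-1}$.
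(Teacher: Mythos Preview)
Your first route---apply \Cref{lemma:srinivas} to $\mu=\mu_{\bm{\vartheta}_0}-\mu_{\bm{\vartheta}}$ with kernel $k_{\bm{\vartheta}'}$, bound $\lVert\mu_{\bm{\vartheta}}\rVert^2_{k_{\bm{\vartheta}}}\le\lVert\bm{y}\rVert_2^2/\sigma_n^2$ via the spectral argument on $(\bm{K}_{\bm{\vartheta}}+\sigma_n^2\Id)^{-1}\bm{K}_{\bm{\vartheta}}(\bm{K}_{\bm{\vartheta}}+\sigma_n^2\Id)^{-1}$, transfer to $k_{\bm{\vartheta}'}$ via \Cref{lemma:bull} (picking up at most $\gamma^2$), and control the evaluation sum through $\lVert\bm{K}_{\bm{\vartheta}}(\bm{K}_{\bm{\vartheta}}+\sigma_n^2\Id)^{-1}\bm{y}\rVert_2\le\lVert\bm{y}\rVert_2$---is exactly the paper's proof. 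Your second route is not used.

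The only discrepancy is the constant you flag. The paper splits both the RKHS-norm term and the evaluation-sum term as $\lVert a-b\rVert^2\le\lVert a\rVert^2+\lVert b\rVert^2$ rather than via the triangle inequality, arriving at $(2\gamma^2+2)\lVert\bm{y}\rVert_2^2/\sigma_n^2\le 4\gamma^2\lVert\bm{y}\rVert_2^2/\sigma_n^2$ inside the parenthesis of \Cref{lemma:srinivas}. That inequality is not valid in general (it needs $\langle a,b\rangle\ge 0$, which is not established here), so your hesitation about the bookkeeping is well founded; with the legitimate bound $\lVert a-b\rVert^2\le 2(\lVert a\rVert^2+\lVert b\rVert^2)$ the same argument yields $8\gamma^2$ rather than $4\gamma^2$. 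Your second route also produces $8\gamma^2$, since $(2\sqrt{2})^2=8$, not $4$ as you suggest. In short, your approach matches the paper's, and either rigorous variant reproduces the lemma up to a factor of two in the constant.
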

\begin{proof}
	Define $\tilde{\bm{K}}_{\bm{\vartheta}} \coloneqq \bm{K}_{\bm{\vartheta}} + \sigma_n\bm{I}$. From \Cref{lemma:srinivas}, we have
	\begin{align}
		\label{eq:meandiffineq}
	\begin{split}
		\quad\vert \mu_{\bm{\vartheta}_0}(\bm{x}) - \mu_{\bm{\vartheta}}(\bm{x}) \vert^2  \
		{\leq} \ \sigma^2_{\bm{\vartheta}'} (\bm{x}) \Bigg(\rVert\mu_{\bm{\vartheta}_0} - \mu_{\bm{\vartheta}}\rVert^2_{k_{\bm{\vartheta}'}} + \sum \limits_{i=1}^N \frac{\left(\mu_{\bm{\vartheta}}(\bm{x}_i) - \mu_{\bm{\vartheta}_0}(\bm{x}_i)\right)^2}{\sigma_n^2}\Bigg) 
	\end{split}
\end{align}
Recall that, for a function of the form $\mu(\cdot)=\sum_{i=1}^N \alpha_i k_{\bm{\vartheta}}(\cdot,\bm{x}_i)$, its RKHS norm with respect to the kernel $k_{\bm{\vartheta}}(\cdot,\cdot)$ is given by
\[\rVert\mu_{\bm{\vartheta}} \lVert^2_{k_{\bm{\vartheta}}} = \sum_{i=1}^N\sum_{j=1}^N \alpha_i \alpha_j k_{\bm{\vartheta}}(\bm{x}_i,\bm{x}_j).
\]
 Hence, the RKHS norm with respect to ${k_{\bm{\vartheta}}}(\cdot,\cdot)$ of a posterior mean function $\mu_{\bm{\vartheta}}(\cdot)$ can be upper-bounded as
\begin{align*}
	\begin{split}
		\rVert\mu_{\bm{\vartheta}} \lVert^2_{k_{\bm{\vartheta}}} = \bm{y}^{\top} \tilde{\bm{K}}_{\bm{\vartheta}}^{-1}\bm{K}_{\bm{\vartheta}}\tilde{\bm{K}}_{\bm{\vartheta}}^{-1}\bm{y} \leq \bm{y}^{\top}\tilde{\bm{K}}_{\bm{\vartheta}}^{-1}\bm{y} \leq \frac{\lVert\bm{y}\rVert_2^2}{\sigma_n^2},
	\end{split}
\end{align*}
where the first inequality is due to the eigenvalues of $\tilde{\bm{K}}_{\bm{\vartheta}}^{-\frac{1}{2}}\bm{K}_{\bm{\vartheta}}\tilde{\bm{K}}_{\bm{\vartheta}}^{-\frac{1}{2}}$ being smaller or equal to one, whereas the last inequality holds because the eigenvalues of $\tilde{\bm{K}}_{\bm{\vartheta}}$ are greater or equal to $\sigma_n^2$. Furthermore, the summation in \eqref{eq:meandiffineq} can be bounded as
\begin{align*}
	&\sum \limits_{i=1}^N \frac{\left(\mu_{\bm{\vartheta}}(\bm{x}_i) - \mu_{\bm{\vartheta}_0}(\bm{x}_i)\right)^2}{\sigma_n^2}  = \frac{1}{\sigma_n^2}\lVert \bm{K}_{\bm{\vartheta}}\tilde{\bm{K}}^{-1}_{\bm{\vartheta}} \bm{y} -\bm{K}_{\bm{\vartheta}_0}\tilde{\bm{K}}^{-1}_{\bm{\vartheta}_0} \bm{y}\rVert^2_2 \\
	\leq & \frac{1}{\sigma_n^2}\lVert \bm{K}_{\bm{\vartheta}}\tilde{\bm{K}}^{-1}_{\bm{\vartheta}} \bm{y}\rVert^2_2 + \frac{1}{\sigma_n^2} \rVert\bm{K}_{\bm{\vartheta}_0}\tilde{\bm{K}}^{-1}_{\bm{\vartheta}_0} \bm{y}\rVert^2_2 \leq \frac{2 \rVert\bm{y}\rVert^2_2}{\sigma_n^2}.
\end{align*}
Plugging this into \eqref{eq:meandiffineq} together with \Cref{lemma:bull,lemma:bull} and $\gamma>1$ yields
\begin{align*}
	\begin{split}
				\quad&\vert \mu_{\bm{\vartheta}_0}(\bm{x}) - \mu_{\bm{\vartheta}}(\bm{x}) \vert^2  
		\\ 
		{\leq} &\sigma^2_{\bm{\vartheta}'} (\bm{x}) \Bigg(\rVert\mu_{\bm{\vartheta}_0} - \mu_{\bm{\vartheta}}\rVert^2_{k_{\bm{\vartheta}'}} + \sum \limits_{i=1}^N \frac{\left(\mu_{\bm{\vartheta}}(\bm{x}_i) - \mu_{\bm{\vartheta}_0}(\bm{x}_i)\right)^2}{\sigma_n^2}\Bigg)  
		\\ 
		{\leq} &\sigma^2_{\bm{\vartheta}'} (\bm{x}) \Bigg(\rVert\mu_{\bm{\vartheta}_0}\rVert^2_{k_{\bm{\vartheta}'}} + \lVert \mu_{\bm{\vartheta}}\rVert^2_{k_{\bm{\vartheta}'}} + \frac{2 \rVert\bm{y}\rVert^2_2}{\sigma_n^2}\Bigg) \\
		{\leq} &\sigma^2_{\bm{\vartheta}'} (\bm{x}) \Bigg(\prod \limits_{i=1}^d \frac{\vartheta_{0,i}}{\vartheta_i'} \rVert\mu_{\bm{\vartheta}_0}\rVert^2_{k_{\bm{\vartheta}_0}} + \prod \limits_{i=1}^d \frac{\vartheta_{i}}{\vartheta_i'}\lVert \mu_{\bm{\vartheta}}\rVert^2_{k_{\bm{\vartheta}}} + \frac{2 \rVert\bm{y}\rVert^2_2}{\sigma_n^2}\Bigg) \\
		{\leq} &\sigma^2_{\bm{\vartheta}'} (\bm{x}) \Bigg(\gamma^2 \rVert\mu_{\bm{\vartheta}_0}\rVert^2_{k_{\bm{\vartheta}_0}} + \gamma^2\lVert \mu_{\bm{\vartheta}}\rVert^2_{k_{\bm{\vartheta}}} + \frac{2 \rVert\bm{y}\rVert^2_2}{\sigma_n^2}\Bigg) \\
		\leq & \sigma^2_{\bm{\vartheta}'} (\bm{x}) \gamma^24\frac{ \rVert\bm{y}\rVert^2_2}{\sigma_n^2}. 
	\end{split}
\end{align*}
\end{proof}
\begin{proof}[Proof of \Cref{theorem:errorwithknownbounds}]
	By applying \Cref{assumption:boundingfunctionbayes,lemma:sigmasdecreasewithlengthscale,lemma:boundedmeandifference}, we obtain
	\begin{align*}
		& \vert f(\bm{x}) - \mu_{\bm{\vartheta}_{0}}(\bm{x})\vert \\ \leq& \vert f(\bm{x}) - \mu_{\bm{\vartheta}}(\bm{x})\vert + \vert \mu_{\bm{\vartheta}} (\bm{x})  - \mu_{\bm{\vartheta}_{0}}(\bm{x})\vert \\ \leq & \left(\max\limits_{\bm{\vartheta}'\leq\bm{\vartheta}\leq \bm{\vartheta}''}\beta^{\frac{1}{2}}(\bm{\vartheta}) \right) \gamma\sigma_{\bm{\vartheta}'}(\bm{x}) +  \frac{2 \rVert\bm{y}\rVert_2}{\sigma_n} \gamma\sigma_{\bm{\vartheta}'}(\bm{x}) \\ = & \bar{\beta}^{\frac{1}{2}} \sigma_{\bm{\vartheta}'}(\bm{x}).
	\end{align*}.
\end{proof}
%

	\begin{proof}[Proof of \Cref{theorem:mainresult}]
		It follows from the definition of $\mathcal{P}_\delta$ that $\bm{\vartheta}'\leq \bm{\vartheta}\leq \bm{\vartheta}''$ holds with probability at least $(1-\delta)$. \Cref{theorem:mainresult} is then a direct consequence of \Cref{theorem:errorwithknownbounds}.
	\end{proof}
	
	\begin{proof}[Proof of \Cref{theorem:backsteppingcontrol}]
		The dynamics of the control error can be written in the compact form \citep{Capone2019BacksteppingFP}
		\begin{align}
			\dot{\bm{e}} =  \dot{\bm{x}} - \dot{\bm{x}}_d = \Delta \bm{f} - \left(\bm{C} +\bm{G}\right)\bm{e},
  		\end{align}
  		where $\Delta \bm{f} \coloneqq (f_1 -\mu_1, \ldots, f_d -\mu_d)$, $\bm{C}\coloneqq \text{diag}(C_1(\bm{x},\ldots,C_d(\bm{x})))$, and \begin{align*}
  		\bm{G} \coloneqq \begin{pmatrix}
  		0 & g_1 & 0 & \cdots & 0 \\
  		-g_1 & 0 & g_2 & & \vdots \\
  		0 & -g_2 & 0 & &  \\
  		\vdots &  &  & \ddots & g_{d-1}  \\
  		0 & \cdots & & -g_{d-1} & 0
  		\end{pmatrix}.
  		\end{align*}
  		Consider then the Lyapunov function $V(\bm{e}) = \frac{1}{2} \bm{e}^{\top} \bm{e}$. Due to \Cref{theorem:mainresult}, the corresponding time derivative is bounded with probability $(1-\delta)^m(1-\rho)^m$ as
  		\begin{align}
  			\label{eq:timederlyapcontrol}
  			&\dot{V}(\bm{e}) = {\bm{e}}^{\top}\dot{\bm{e}}  = {\bm{e}}^{\top}\Delta\bm{f} - \bm{e}^{\top}\bm{C}\bm{e} \\ \leq& \lVert \bm{e} \rVert_2 \left(\lVert \Delta\bm{f} \rVert_2 - \lVert \bm{e} \rVert_2 \min\limits_{i}C_i(\bm{x}) \right) \\
  			{\leq}& \lVert \bm{e} \rVert_2 C_i(\bm{x}) \left(\xi_{\text{des}} - \lVert \bm{e} \rVert_2 \right),                
  		\end{align}
  		where we employ $\bm{e}^{\top}\bm{G}\bm{e}= \bm{0}$. The term $\lVert \bm{e} \rVert_2 C_i(\bm{x}) \left(\xi_{\text{des}} - \lVert \bm{e} \rVert_2\right)$ is negative for $\lVert \bm{e} \rVert_2 < \xi_{\text{des}}$, hence \eqref{eq:timederlyapcontrol} implies that the norm of $\bm{e}$ does not grow beyond $\xi_{\text{des}}$ after a specific time $T$ \cite{Khalil1996} with probability $(1-\delta)^m(1-\rho)^m$.
	\end{proof}

	\section{Experiments}
	All experiments were carried out using an AMD Ryzen Threadripper 2990WX with 32 cores. 
	
	The hyperpriors in all cases except the Sarcos experiments with more than $N=850$ training inputs are uniform distributions. The corresponding upper and lower bounds are shown in \Cref{table:hyperpriors}.

	\begin{table*}[t!]
		\vskip 0.15in
		\begin{center}
			\begin{small}
				\begin{sc}
					\begin{tabular}{lccccc}
						\toprule
						Experiment& BSTN & ML &Wine& SRCS& Control   \\
						\midrule
						\midrule
						Signal variance& $[1,50]$& $[10^{-10},10^5]$& $[10^{-2},10^2]$ & $[10^{-1},10^3]$ & $[10^{-6},10]$  \\
						\midrule
						Lengthscales &$[10^{-1},10^2]$& $[10^{-10},5\times10^{12}]$&
						$[10^{-2},10]$ & $[10^{-1},50]$& $[10^{-15},10^{-2}]$ \\ 
						\midrule
						Noise variance    &$[10^{-1},10^2]$& $[10^{-5},10^2]$&
						$[10^{-2},1]$ & $[10^{-2},80]$& 
						$[10^{-5},10^{-1}]$  \\
						\bottomrule
					\end{tabular}
				\end{sc}
			\end{small}
		\end{center}
		\vskip -0.1in
		\caption{Lower/upper bounds of uniform distributions used as hyperpriors. BSTN stands for Boston (house prices), ML for Mauna Loa, and SRCS for Sarcos. Control refers to the backstepping control experiment in \Cref{section:results}.}
		\label{table:hyperpriors}
	\end{table*}
	 \subsection{Setting $\bar{\beta} = \beta$}
	 \label{subsection:settingbeta}
	 In the experimental section, we employ $\bar{\beta} = \beta$, as opposed to explicitly computing the values suggested by \Cref{theorem:mainresult}. We now provide some justification for this choice. 
	 
	 Recall that $\bar{\beta}^{\frac{1}{2}}$ in \Cref{theorem:mainresult} is computed as
	 \begin{align*}
	 	\bar{\beta}^{\frac{1}{2}} = \gamma\left(\max\limits_{\bm{\vartheta}'\leq\bm{\vartheta}\leq \bm{\vartheta}''}\beta^{\frac{1}{2}}(\bm{\vartheta})  +  \frac{2 \rVert\bm{y}\rVert_2}{\sigma_n} \right).
	 \end{align*}
	 As can be seen in the proof of \Cref{theorem:errorwithknownbounds}, the term $2\sigma_n^{-1} \rVert\bm{y}\rVert_2$ is used to upper-bound the discrepancy $\vert \mu_{\bm{\vartheta}} (\bm{x})  - \mu_{\bm{\vartheta}_{0}}(\bm{x})\vert$ between the posterior mean given lengthscales $\bm{\vartheta}$ sampled from the posterior and that of the working hyperparameters $\bm{\vartheta}_{0}$. Hence, if less conservative bounds for the difference $\vert \mu_{\bm{\vartheta}} (\bm{x})  - \mu_{\bm{\vartheta}_{0}}(\bm{x})\vert$ are available, they can be used to replace the estimate $2\sigma_n^{-1} \rVert\bm{y}\rVert_2$ without losing theoretical guarantees. In our experiments, we observed that the difference $\vert \mu_{\bm{\vartheta}} (\bm{x})  - \mu_{\bm{\vartheta}_{0}}(\bm{x})\vert$ was often small, and that \Cref{lemma:covarianceinequality} always held with $\gamma=1$, which in turn suggests that, in the case of Gaussian kernels, the posterior variance is decreasing with respect to the lengthscales. Furthermore, we have
	 \[\max\limits_{\bm{\vartheta}' \leq \bm{\vartheta} \leq \bm{\vartheta}''} \beta^{\frac{1}{2}}(\bm{\vartheta}) = \sqrt{2}\]
	 for all $\bm{\vartheta}$ by assumption. Hence, setting $\bar{\beta} = \beta$, corresponds to ignoring the discrepancy term $\vert \mu_{\bm{\vartheta}} (\bm{x})  - \mu_{\bm{\vartheta}_{0}}(\bm{x})\vert$ and choosing $\gamma=1$. We thus obtain the uniform error bound
	 \begin{align*}
	 	\begin{split}
	 		\vert f(\bm{x}) - \mu_{\bm{\vartheta}_0}(\bm{x}) \vert \leq   {\beta}^{\frac{1}{2}}\sigma_{\bm{\vartheta}'}(\bm{x}).
	 	\end{split}
	 \end{align*}
	 We stress that this upper bound will still be more conservative than that obtained with the working hyperparameters $\bm{\vartheta}_0$. 
	 
	 	 \subsection{Laplace Approximation and Empirical Bayes}
	 We now detail the Laplace approximation and empirical Bayes approach used for the two largest Sarcos data sets ($N=5000$ and $N=10000$).
	 
	 Typically, the Laplace approximation of the posterior is obtained by computing the Hessian of the posterior around its maximum, and then treating the negative Hessian, which should be positive definite, as the covariance matrix of the corresponding Gaussian approximation \cite{mackay2002information}. 
	 
	 In our setting, using uniform hyperpriors would lead to the Hessian of the posterior being the Hessian of the log likelihood, since the derivatives of the uniform hyperprior are zero almost everywhere. However, the Hessian of the log likelihood is not always negative definite due to numerical issues or premature termination of the optimization algorithm. Moreover, in high-dimensional settings such as the Sarcos data set, it is often poorly peaked, even for large $N$. Since the resulting normal distribution extends to the domain with negative lengthscales, as opposed to the exact posterior, the corresponding confidence region can be exceedingly conservative. For these reasons, we consider an empirical Bayes approach, where the prior is chosen based on the data \cite{robbins2020empirical}. In our case, this is achieved by specifying a normal hyperprior around the (estimated) maximum of the log likelihood $\bm{\vartheta}_0$, i.e.,
	 \begin{align*}
	 	\log p(\bm{\vartheta}) \hat{=} -\left(\bm{\vartheta} - \bm{\vartheta}_0\right)^{\top} \bm{H}_p \left(\bm{\vartheta} - \bm{\vartheta}_0\right),
	 \end{align*} 
	where $\hat{=}$ denotes equality up to a constant, and $\bm{H}_p = h_p \bm{I}$ is a diagonal matrix such that the Hessian of the log posterior
	\begin{align*}
				\bm{H}_{\bm{\vartheta}} \left(\log(p(\bm{\vartheta} \vert \bm{y}, \bm{X}))\right) = -\bm{H}_p + \bm{H}_{\bm{\vartheta}}\left(\log p(\bm{y} \vert \bm{X}, \bm{\vartheta})\right)
	\end{align*}
	is negative definite. Here $\bm{H}_{\bm{\vartheta}}(\cdot)$ denotes the Hessian operator with respect to $\bm{\vartheta}$. Note that increasing $h_p$ results in a less conservative set of bounding hyperparameters $\bm{\vartheta}',\bm{\vartheta}''$, since it implies more confidence in the estimated maximum $\bm{\vartheta}_0$. In the Sarcos experiments, we set $h_p $ to ten times the largest nonnegative eigenvalue of $\bm{H}_{\bm{\vartheta}}\left(\log p(\bm{y} \vert \bm{X}, \bm{\vartheta})\right)$.
	
	\subsection{Control Problem}
	The dimensionless parameters of the dynamical system are given by $M=0.05$, $B=D=1$, $G=Z=10$, $H=0.5$. We employ a command-filtered backstepping approach to bypass the computation of the control input, which has no practical impact on the results if a corresponding filtering parameter is chosen high enough. For more details, see \cite{Farrell2009,Capone2019BacksteppingFP}.


\end{document}